\def\BibTeX{{\rm B\kern-.05em{\sc i\kern-.025em b}\kern-.08em
    T\kern-.1667em\lower.7ex\hbox{E}\kern-.125emX}}
\newtheorem{theorem}{Theorem}
\newtheorem{lemma}{Lemma}
\newcommand{\longname}{Proportional Fairness Attack}
\newcommand{\name}{PFA}
\newcommand{\hide}[1]{}
\begin{document}

\title{
Adversarial Bias: Data Poisoning Attacks on Fairness
\thanks{This work is supported by NSF (2134079).}
}

\author{\IEEEauthorblockN{Eunice Chan}
\IEEEauthorblockA{\textit{University of Illinois, Urbana-Champaign} \\
Champaign, USA \\
ecchan2@illinois.edu}
\and
\IEEEauthorblockN{Hanghang Tong}
\IEEEauthorblockA{\textit{University of Illinois, Urbana-Champaign} \\
Champaign, USA \\
htong@illinois.edu}
}

\maketitle

\begin{abstract}
With the growing adoption of AI and machine learning systems in real-world applications, ensuring their fairness has become increasingly critical. The majority of the work in algorithmic fairness focus on assessing and improving the fairness of machine learning systems. There is relatively little research on {\em fairness vulnerability}, i.e., how an AI system's fairness can be intentionally compromised. In this work, we first provide a theoretical analysis demonstrating that a simple adversarial poisoning strategy is sufficient to induce maximally unfair behavior in naive Bayes classifiers. Our key idea is to strategically inject a small fraction of carefully crafted adversarial data points into the training set, biasing the model's decision boundary to disproportionately affect a protected group while preserving generalizable performance. To illustrate the practical effectiveness of our method, we conduct experiments across several benchmark datasets and models. We find that our attack significantly outperforms existing methods in degrading fairness metrics across multiple models and datasets, often achieving substantially higher levels of unfairness with a comparable or only slightly worse impact on accuracy. Notably, our method proves effective on a wide range of models, in contrast to prior work, demonstrating a robust and potent approach to compromising the fairness of machine learning systems.
\end{abstract}


\begin{IEEEkeywords}
machine learning algorithms, classification algorithms, adversarial machine learning, ethics
\end{IEEEkeywords}

\section{Introduction}
Ensuring fairness in AI and machine learning systems is a critical concern alongside their growing real-world deployment. Current research efforts largely prioritize building \cite{mehrabi2021survey} and evaluating \cite{Kamiran2012, NIPS2016_9d268236} the fairness of such systems. In contrast, the vulnerability of existing systems to targeted adversarial attacks on their fairness is a crucial area that remains significantly under-explored.

Targeted adversarial attacks on fairness are a class of poisoning attacks that introduce adversarially crafted samples into a training dataset, leading to a system that exhibits worse fairness compared to one trained only on the clean samples. A common approach in prior work involves generating poisoned samples through loss optimization techniques \cite{DBLP:journals/corr/abs-2004-07401, liu2024towards, Mehrabi_Naveed_Morstatter_Galstyan_2021, nalmpantis2022re, kang2023deceptivefairnessattacksgraphs}. However, these methods rely on the assumption that the base system is differentiable, which limits their applicability. While some existing approaches do not require a differentiable base system, they fall short in fully exploiting the poisoned dataset available to the adversary \cite{Mehrabi_Naveed_Morstatter_Galstyan_2021, nalmpantis2022re}.
\begin{figure}
    \centering
    \includegraphics[trim={0 0.2cm 0 1.5cm},clip,width=\linewidth]{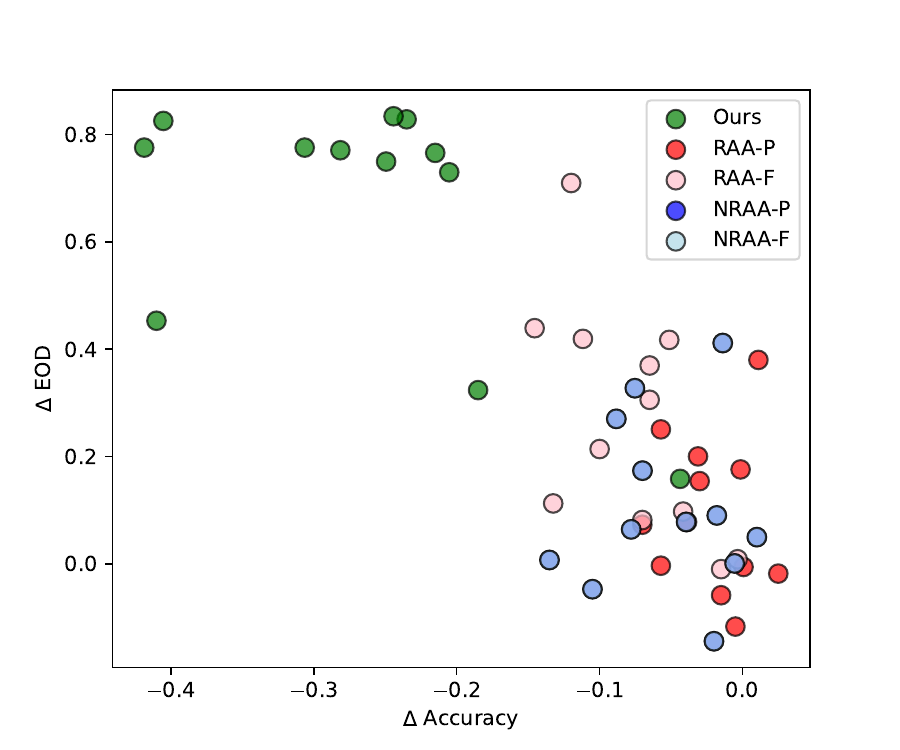}
    \caption{
    Our method consistently occupies a Pareto-dominant front,  achieving higher disparity (EOD) and better trade-offs compared to other methods.}
    \label{fig:summary}
\end{figure}
In this work, we introduce a novel fair data poisoning attack that does not assume a differentiable base system and more effectively utilizes the poisoned dataset to degrade fairness while maintaining the overall system performance. Our method strategically injects adversary-controlled samples into the dataset, reducing the fairness of a data-driven system trained on it. The key idea of our method is to leverage a surrogate model to identify and poison data points that are most influential in skewing the decision boundary against the targeted protected group, while carefully preserving overall predictive accuracy. Furthermore, we dynamically adjust the proportion of poisoned samples based on the surrogate model's intermediate outputs to maximize the fairness degradation effect. If fairness with respect to the attacked group is not actively monitored, this attack remains stealthy and may go unnoticed. 


To illustrate the practical effectiveness of our method, we benchmark our approach against existing methods across four base models. As visualized in \Cref{fig:summary}, our empirical results highlights the weakness of prior work in that it does not work well across a wide variety of base models. Furthermore, we demonstrate that our method is significantly more effective in attacking fairness than prior work: achieving the maximum fairness disparity for multiple models on several datasets. 
The main contributions of this paper are:
\begin{enumerate}
    \item \textit{Problem Formulation}: We formally define the problem of a targeted fairness poisoning attack against a machine learning system, aiming to maximize disparity with respect to a protected group while maintaining acceptable overall performance, without requiring differentiability of the base model.
    \item \textit{Theoretical Analysis}: We provide a formal theoretical analysis of the vulnerability of naive Bayes classifiers to the key ideas of our fairness attack strategy. We prove that a poisoning strategy with certain properties is guaranteed to induce maximally unfair behavior, establishing a theoretical foundation for the existence and efficacy of such attacks.
    \item \textit{Novel Attack Method}: We propose a novel and effective fair data poisoning attack, \longname{} (\name{}), which leverages a surrogate model and a dynamic proportional sampling strategy to strategically craft and inject poisoning samples, achieving significant fairness degradation with minimal impact on accuracy.
    \item \textit{Extensive Evaluation}: Through comprehensive experiments on multiple real-world datasets and a diverse set of base models, we demonstrate the superior performance and robustness of our approach compared to existing state-of-the-art methods, highlighting their limitations and providing valuable insights into the vulnerabilities of machine learning systems to fairness attacks.
\end{enumerate}

The rest of the paper is organized as follows: first, we review related work on data poisoning attacks and fairness in machine learning. Then, we detail our proposed fairness poisoning attack method. Following that, we present the experimental results and discussion of the results. Finally, we conclude the paper and outline directions for future work. 
\section{Related Work}

\subsection{Fair Machine Learning}
Much of the research in fair machine learning focus on defining measures of fair machine learning \cite{Kamiran2012, NIPS2016_9d268236} and developing algorithms to create fair systems \cite{mehrabi2021survey}. Such fairness algorithms can be broadly categorized into three stages: pre-processing, in-processing, and post-processing.

Pre-processing techniques pre-processes the data before training the classifer. This is frequently done with reweighing, subsampling, or feature engineering \cite{doi:10.1089/big.2016.0048, kamiran2012data, NEURIPS2022_ad991bbc}. In-processing techniques modify the training process itself, such as by adding fairness regularizers to the loss function \cite{agarwal2018reductions, pmlr-v28-zemel13, 10.1007/978-3-642-33486-3_3} or designing fairness-aware model architectures \cite{5694053, Calders2010, iosifidis2019adafair}. Finally, post-processing techniques can be done after the training process by adjusting the outputs directly \cite{doi:10.1089/big.2016.0048, 10.1145/3306618.3314287, 8682620}.

While extensive work has focused on developing fair machine learning methods, these interventions must be intentionally applied. In practice, many deployed models operate without any such modifications. This reality motivates our focus on base models when examining how data poisoning can systematically degrade fairness.  Furthermore, studying these unmodified base models reveals worst-case scenarios that fairness methods aim to mitigate. This motivates our work, which explores the impact of data poisoning on fairness and proposes a novel attack that effectively reduces fairness while minimizing the effect on overall accuracy.


\subsection{Fair Data Poisoning}
Fair data poisoning attacks aim to reduce the fairness of a machine learning model by injecting carefully crafted samples into the training dataset. A common approach involves generating these poisoned samples through loss optimization techniques \cite{DBLP:journals/corr/abs-2004-07401, liu2024towards, Mehrabi_Naveed_Morstatter_Galstyan_2021, nalmpantis2022re, kang2023deceptivefairnessattacksgraphs, 10.1007/978-3-031-00123-9_30}. However, these methods typically require the base model to be differentiable, restricting their application to a limited set of scenarios. Existing approaches that do not rely on differentiability often fail to fully leverage the adversary’s control over the poisoned dataset \cite{Mehrabi_Naveed_Morstatter_Galstyan_2021, nalmpantis2022re, 10.1007/978-3-031-00123-9_30}. To address these limitations, we propose a novel fair data poisoning attack that operates without differentiability assumptions and effectively utilizes the poisoned dataset to degrade fairness while minimizing the impact on overall accuracy.

\section{Problem Statement}

\begin{table}[h]
\centering
\resizebox{0.5\textwidth}{!}{%
\begin{tabular}{ll}
\toprule
\textbf{Symbol} & \textbf{Description} \\
\midrule
$\mathcal{D}$ & Original training dataset: $(\mathbf{X}, S, Y)$ \\
$\mathcal{D}_e$ & Evaluation dataset \\
$\mathcal{D}_p$ & Poisoned dataset generated by adversary \\
$\mathbf{X}$ & Non-sensitive input features \\
$V_j$ & Set of possible values for the $j$-th non-sensitive input feature \\
$S \in \{0,1\}$ & Binary sensitive attribute \\
$Y \in \{0,1\}$ & Binary outcome label \\
$S=1$ & Advantaged group: $\mathbb{P}[Y=1|S=1] \geq \mathbb{P}[Y=1|S=0]$ \\
$N$ & Number of poisoning iterations (candidate datasets generated) \\
$m_{\mathcal{D} \cup \mathcal{D}'}$ & Classifier trained on $\mathcal{D} \cup \mathcal{D}'$ \\
$\varepsilon \in [0,1]$ & Poisoning budget (fraction of $\mathcal{D}$ used for poisoning) \\
$\alpha, \beta \geq 0$ & Parameters controlling fairness-performance trade-off \\
\bottomrule
\end{tabular}
}
\caption{Main notation}
\label{tab:notation}
\end{table}

In this paper, we focus on attacking the group fairness properties of a classifier which predicts a binary outcome ($Y$) using a set of $K$ non-sensitive features ($\mathbf{X} = (X_1, \dots, X_K)$) and one binary sensitive attribute ($S$). In particular, we focus on the statistical parity difference (SPD) \cite{Kamiran2012} and equalized odds difference (EOD) \cite{NIPS2016_9d268236} measures of fairness. The training dataset is notated as $\mathcal{D}$: ($\mathbf{X}$, $S$, $Y$). Subsets of the dataset are indicated using subscripts to specify the selection criteria. The labels associated with each sample in a dataset $\mathcal{D}$ is denoted as $\mathcal{Y}(\mathcal{D})$. We define the advantaged group $S=1$ as the group that has a higher proportion of positive outcomes in the dataset $\mathcal{D}$ ($\mathbb{P}[Y=1|S=1] \geq \mathbb{P}[Y=1|S=0]$). \Cref{tab:notation} summarizes the primary notation used throughout the paper.

The adversary has access to 
the dataset $\mathcal{D}$ with which to generate a poisoned dataset $\mathcal{D}_p$ with $\varepsilon|\mathcal{D}|$ samples where $\varepsilon \in [0, 1]$ controls the poisoning budget. The attacked classifier $m$, which takes ($\mathbf{X}$, $S$), as input, is then trained on $\mathcal{D} \cup \mathcal{D}_p$. The data poisoning attack aims to satisfy two dual goals: (1) to preserve overall classification performance so that if the fairness of the sensitive attribute group is not monitored, the attack will go undetected, and (2) to attack fairness, that is, increase the disparity in outcomes between the advantaged and disadvantaged groups.

\begin{algorithm}[tb]
\caption{\longname}
\label{alg:ours}
\textbf{Input}: 
Train Dataset $\mathcal{D}$, 
Evaluation Dataset $\mathcal{D}^e$, \\
Surrogate Model $m$, 
Trade-off Parameters $\alpha, \beta$, \\ 
Poison Proportion $\varepsilon$, 
Number of Candidates $N$ \\
\textbf{Output}: Poisoned Dataset $\mathcal{D}_p$
\begin{algorithmic}[1] 
\STATE $\mathcal{D}^c \gets [\emptyset, \dots, \emptyset]$ where the $i$-th item $\mathcal{D}^c[i]$ of the list represents the candidate dataset generated in the $i$-th iteration.
\STATE $\mathcal{B} \gets [b_0, \dots, b_k]$ s.t. $b_i \geq 0$ and  $\sum_{b \in \mathcal{B}} b = \varepsilon|\mathcal{D}|$.
\FOR{$n \in [1, \dots, N]$}
    \FOR{$B \in \mathcal{B}$}
        \STATE $\hat{Y} \gets m_{\mathcal{D} \cup \mathcal{D}^c[n]}(\mathcal{D}^e)$ \label{step:surrogate}
        \STATE Select a group for sampling with \Cref{eq:cdm}.
        \IF {$CDM_{n, 0} < CDM_{n, 1}$} \label{step:selcond}
            \STATE $\mathcal{D}^s \gets \mathcal{D}_{S=1, \hat{Y}=0}$
        \ELSE
            \STATE $\mathcal{D}^s \gets \mathcal{D}_{S=0, \hat{Y}=1}$
        \ENDIF \label{step:sampleend}
        \IF {$|\mathcal{D}^s| = 0$} \label{step:unifstart}
            \FOR{$b \in [1, \dots, B]$}
                \STATE $\mathcal{D}^s \gets x_{s,j} \sim Uniform(V_j)$
            \ENDFOR
        \ENDIF \label{step:unifend}
        \STATE $\mathcal{D}^c_{new} \gets \textsc{Sample}(\mathcal{D}^s, B)$ \label{step:sample}
        \STATE Create poisoned sample according to \Cref{thm:nb_unfair}.
        \STATE $\mathcal{D}^c[n+1] \gets \mathcal{D}^c[n] \cup \{(\mathbf{x}, s, s) \mid (\mathbf{x}, s) \in \mathcal{D}^c_{new}\}$  \label{step:flip}
    \ENDFOR
\ENDFOR
\STATE Select candidate dataset by assigning a score to each dataset according to \Cref{eq:selscore}.
\STATE \textbf{return} $\min_{\mathcal{D}' \in \mathcal{D}^c} f(m_{\mathcal{D} \cup \mathcal{D}'}(\mathcal{D}^e), \mathcal{Y}(\mathcal{D}^e))$ \label{step:sel}
\end{algorithmic}
\end{algorithm}

\section{Proposed Method}

We propose a poisoning strategy that incrementally constructs a candidate poisoned training dataset. In each iteration, our method, \longname\ (\name{}) (detailed in \Cref{alg:ours}) uses a surrogate model (Line \ref{step:surrogate}) to dynamically decide whether to target the advantaged or disadvantaged group at each step (Line \ref{step:selcond}), and generates poisoned samples in this sensitive attribute group (Lines \ref{step:sample}-\ref{step:flip}).

\subsection{Characterizing Maximal Unfairness}


The maximally unfair scenario occurs when the classifier completely segregates predictions by sensitive attribute e.g. $\hat{Y}=s$. We can demonstrate the severity of this unfairness by analyzing its impact on two standard fairness metrics: statistical parity difference (SPD) \cite{Kamiran2012} and equalized odds difference (EOD) \cite{NIPS2016_9d268236}.

SPD measures the difference in favorable outcomes between the advantaged and disadvantaged groups. The larger the difference, the less fair the predictor. It is defined as
\[
\text{SPD} = \left| \mathbb{P}(\hat{Y} = 1 \mid S = 1) - \mathbb{P}(\hat{Y} = 1 \mid S = 0) \right|.
\]


EOD measures the maximum difference in the true and false positive rates between the advantaged and disadvantaged groups. Like SPD, the larger the difference, the less fair the predictor. This difference is given by
\begin{align*}
\text{EOD} = &\max_{y \in \{0,1\}}  \Big| \mathbb{P}(\hat{Y} = 1 \mid Y = y, S = 1) \\
& - \mathbb{P}(\hat{Y} = 1 \mid Y = y, S = 0) \Big|.
\end{align*}


\begin{lemma}[Maximally Unfair Behavior] 
If a classifier exhibits $\hat{Y}=s$ behavior (i.e., its predictions perfectly align with the sensitive attribute), then both Statistical Parity Difference (SPD) and Equalized Odds Difference (EOD) attain their maximum value of 1.
\label{lem:max_unfair}
\end{lemma}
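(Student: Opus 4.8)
The plan is to substitute the defining property $\hat{Y}=s$ (the prediction always equals the sensitive attribute value, i.e., $\hat{Y}=S$ as events) directly into each metric and evaluate the resulting conditional probabilities. As a preliminary observation, note that both SPD and EOD are absolute differences of probabilities, each term lying in $[0,1]$, so their values are automatically bounded above by $1$; it therefore suffices to exhibit the value $1$ in each case.

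For SPD, conditioning on $S=1$ forces $\hat{Y}=1$ with probability $1$, while conditioning on $S=0$ forces $\hat{Y}=1$ with probability $0$. Hence $\mathbb{P}(\hat{Y}=1\mid S=1) - \mathbb{P}(\hat{Y}=1\mid S=0) = 1 - 0 = 1$, and taking the absolute value gives $\text{SPD}=1$. For EOD, fix $y\in\{0,1\}$: on the event $\{Y=y,\,S=1\}$ we again have $\hat{Y}=1$ almost surely, so $\mathbb{P}(\hat{Y}=1\mid Y=y,S=1)=1$, whereas on $\{Y=y,\,S=0\}$ we have $\hat{Y}=0$ almost surely, so $\mathbb{P}(\hat{Y}=1\mid Y=y,S=0)=0$. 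Thus each of the two terms inside the maximum equals $|1-0|=1$, and $\text{EOD}=1$.

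The one point I would be careful about — and the only real subtlety in an otherwise immediate argument — is well-definedness of the conditional probabilities: SPD requires both sensitive groups to occur with positive probability, and EOD additionally requires each combination $\{Y=y,\,S=s\}$ to have positive probability. These are benign non-degeneracy assumptions (both groups are present, and each group contains members of each label), which I would state explicitly at the outset of the proof; under them the computations above go through verbatim and the lemma follows. This is the main (and quite mild) obstacle; there is no heavy calculation involved.
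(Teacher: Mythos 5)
Your proof is correct and follows essentially the same route as the paper's: substitute $\hat{Y}=s$ into each metric, observe that the two conditional probabilities become $1$ and $0$, and conclude each absolute difference equals $1$. The only difference is that you make the non-degeneracy assumptions explicit for all four $(Y,S)$ combinations, whereas the paper only notes ``assuming both groups contain positive examples'' for the $y=1$ case of EOD --- a minor but welcome bit of extra care.
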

\begin{proof}
 We demonstrate this for each fairness metric:
\begin{enumerate} 
    \item \textit{SPD.}
    If $\hat{Y}$ is always equal to $S$, $\mathbb{P}(\hat{Y}=1 \mid S=1) = 1$ and $\mathbb{P}(\hat{Y}=1 \mid S=0) = 0$. Plugging these values into the SPD definition: $\text{SPD} = |1 - 0| = 1$.
    
    \item \textit{EOD.}
    Assuming both groups contain positive examples, consider the component for $y=1$. Given $\hat{Y}=s$, it follows:
        \begin{itemize}
            \item $\mathbb{P}(\hat{Y}=1 \mid Y=1, S=1) = 1$ since $S=1$ and $\hat{Y}=s$.
            \item $\mathbb{P}(\hat{Y}=1 \mid Y=1, S=0) = 0$ since $S=0$ and $\hat{Y}=s$.
        \end{itemize}
    EOD for $y=1$ is $|1-0|=1$. A similar derivation holds for the $y=0$. Hence, EOD is 1.
\end{enumerate}
\end{proof}

As demonstrated in \Cref{lem:max_unfair}, in the situation when $\hat{Y}=s$, both fairness measures attain their maximum value, indicating the classifier's behavior is maximally unfair, systematically favoring one group at the expense of the other.


\subsection{Poisoned Sample Generation}

The overall intuition for poisoned sample generation is that we select samples for which the surrogate model does not follow the maximally unfair behavior and flip the labels associated with them to encourage the model to do so. We use intuition drawn from the setting of a naive Bayes classifier base model to identify these samples.

Consider a naive Bayes classifier $m$ which assumes conditional independence between every pair of features given the outcome $y$. The decision rule to get the predicted outcome $\hat{y}$ is to select the $y$ with the largest posterior probability:
$$\hat{y} = m(\mathbf{x}, s) = {\arg \max}_{y\in\{0,1\}} \mathbb{P}(Y=y \mid \mathbf{x}, s)$$

Assuming conditional independence, this posterior probability is proportionate to $score_{NB}(\mathbf{x}, s, y)$, defined as:
$$\mathbb{P}[Y=y] \mathbb{P}[S=s \mid Y=y]\prod_{i=1}^{K} \mathbb{P}[X_i= x_i \mid Y = y].$$

The decision rule can then be rewritten as:
$$ {\arg \max}_{y\in\{0,1\}} score_{NB}(\mathbf{x}, s, y)$$


If an adversary seeks to push the naive Bayes classifier toward the maximally unfair behavior of predicting $\hat{Y}=s$ (\Cref{lem:max_unfair}) for all $\mathbf{x}$, the adversary would want to manipulate the probabilities so that for all $\mathbf{x}$ and a given sensitive attribute group $s$, $score_{NB}(\mathbf{x}, s, s) > score_{NB}(\mathbf{x}, s, 1-s)$. The adversary can achieve this by injecting poisoned samples, which we formalize in \Cref{thm:nb_unfair}.

\begin{theorem}[Inducing Maximally Unfair Behavior in Naive Bayes via Adversarial Score Biasing]
Suppose, for both sensitive attribute group $s$, a set of $k_s$ poisoned samples are generated, each of the form $(\mathbf{x}_s, S=s, Y=s)$, where the feature vector $\mathbf{x}_s = (x_{s,1}, \dots, x_{s,K})$ is constructed so that, as the number of added samples approaches infinity, the feature distributions become uncorrelated with the label. That is, for each feature $X_i$ and for $y \in Y$:
$$\lim_{k \to \infty} \mathbb{P}[X_i=x_i \mid Y=y] = \mathbb{P}[X_i=x_i].$$

Let a new classifier $\mathcal{M}_{p}$ be trained on $\mathcal{D}$ augmented with these poisoned samples, such that the class priors are balanced: $|\mathcal{D}_{Y=1}|+k_1 = |\mathcal{D}_{Y=0}|+k_0$.

There exists a sufficiently large number of poisoned samples such that the classifier $\mathcal{M}_{p}$ is guaranteed to exhibit maximally unfair behavior (e.g. $\hat{Y}=s$, as detailed by \Cref{lem:max_unfair}).
\label{thm:nb_unfair}
\end{theorem}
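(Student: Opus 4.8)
The plan is to reduce the assertion to a single inequality between naive Bayes scores and then verify it in the large-sample limit, factor by factor. \textbf{Step 1 (Reduction to a score inequality).} By the naive Bayes decision rule, $\mathcal{M}_p$ predicts $\hat{y} = {\arg\max}_{y\in\{0,1\}} score_{NB}(\mathbf{x}, s, y)$, so $\mathcal{M}_p$ realizes the $\hat{Y}=s$ behavior of \Cref{lem:max_unfair} precisely when, for each $s \in \{0,1\}$ and every $\mathbf{x} \in V_1 \times \cdots \times V_K$,
\[
score_{NB}(\mathbf{x}, s, s) > score_{NB}(\mathbf{x}, s, 1-s);
\]
once this holds, \Cref{lem:max_unfair} yields $\text{SPD} = \text{EOD} = 1$ immediately. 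Taking the trained conditionals to be the (Laplace-smoothed) empirical frequencies on the augmented dataset, I would divide the two scores and study the likelihood ratio
\begin{align*}
R_s(\mathbf{x}) = {} & \frac{\mathbb{P}[Y=s]}{\mathbb{P}[Y=1-s]} \cdot \frac{\mathbb{P}[S=s \mid Y=s]}{\mathbb{P}[S=s \mid Y=1-s]} \\
& {}\cdot \prod_{i=1}^{K} \frac{\mathbb{P}[X_i = x_i \mid Y=s]}{\mathbb{P}[X_i = x_i \mid Y = 1-s]},
\end{align*}
and show $R_s(\mathbf{x}) > 1$ once the poisoning budget is large enough.

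\textbf{Step 2 (Prior and feature factors tend to $1$).} The prior factor equals $1$ exactly, by the balancing hypothesis $|\mathcal{D}_{Y=1}|+k_1 = |\mathcal{D}_{Y=0}|+k_0$ (which also forces $\mathbb{P}[Y=0] = \mathbb{P}[Y=1] = \tfrac12$). For the product over features, the hypothesis $\lim_{k \to \infty} \mathbb{P}[X_i=x_i \mid Y=y] = \mathbb{P}[X_i=x_i]$ for each $y$ makes every per-feature ratio tend to $1$; since $V_1 \times \cdots \times V_K$ is finite and (with positive feature marginals, equivalently Laplace smoothing) no denominator degenerates, this convergence is uniform in $\mathbf{x}$, so the whole product tends to $1$ uniformly.

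\textbf{Step 3 (Sensitive-attribute factor diverges).} The poisoned samples add $k_s$ points with $(S=s, Y=s)$ and none with $(S=s, Y=1-s)$, so
\begin{align*}
\mathbb{P}[S=s \mid Y=s] &= \frac{|\mathcal{D}_{S=s, Y=s}| + k_s}{|\mathcal{D}_{Y=s}| + k_s} \longrightarrow 1, \\
\mathbb{P}[S=s \mid Y=1-s] &= \frac{|\mathcal{D}_{S=s, Y=1-s}|}{|\mathcal{D}_{Y=1-s}| + k_{1-s}} \longrightarrow 0,
\end{align*}
as $k_s, k_{1-s} \to \infty$ (the balancing constraint fixes only the \emph{difference} $k_0 - k_1$, so both grow together). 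Hence the middle factor of $R_s(\mathbf{x})$ diverges, and combined with Step 2 we get $R_s(\mathbf{x}) \to +\infty$ uniformly over the finite set of pairs $(\mathbf{x}, s)$. I would then let $k^{\star}$ be the largest of the finitely many per-$(\mathbf{x}, s)$ thresholds beyond which $R_s(\mathbf{x}) > 1$; any poisoning set of total size at least $k^{\star}$ satisfying the stated hypotheses forces $\hat{Y}=s$, hence maximal unfairness by \Cref{lem:max_unfair}.

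\textbf{Main obstacle.} The delicate point is Step 2: upgrading pointwise convergence of the feature-ratio product to uniform convergence over all $\mathbf{x}$ and ruling out degenerate $0/0$ ratios. This is handled by working over finite feature domains and invoking positivity of the feature marginals (or smoothing); after that, the remaining estimates are elementary limits. A minor point to confirm is that the $k \to \infty$ regime is consistent with the prior-balancing equality — it is, because that equality constrains only $k_0 - k_1$.
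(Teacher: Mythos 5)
Your proposal follows essentially the same route as the paper's proof: reduce to the score-ratio inequality, decompose it into the class-prior factor (exactly $1$ by the balancing hypothesis), the sensitive-attribute factor (which diverges since poisoned samples only add mass to $(S=s,Y=s)$), and the per-feature likelihood ratios (each tending to $1$ by the uncorrelation hypothesis). Your Step 2--3 additions — uniform convergence over the finite feature domain, Laplace smoothing to rule out degenerate ratios, and the observation that the balancing constraint only fixes $k_0-k_1$ — are careful refinements the paper leaves implicit, but the argument is the same.
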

\begin{proof}
If $\mathcal{M}_{p}$ exhibits maximally unfair behavior for an input $(\mathbf{x}, s)$,
$$score_{NB}(\mathbf{x}, s, 1-s) < score_{NB}(\mathbf{x}, s, s).$$ To prove this, we show the ratio of the two scores is greater than 1.
$$1 < \frac{score_{NB}(\mathbf{x}, s, s)}{score_{NB}(\mathbf{x}, s, 1-s)}$$

\begin{lemma}[Prior Balance]
The ratio of class priors of the poisoned dataset, $\frac{\mathbb{P}[Y=s]}{\mathbb{P}[Y=1-s]}$, is 1.
\label{lem:score_1}
\end{lemma}
\begin{proof}
Let $|\mathcal{D}_{Y=y}'|$ denote the count of samples for class $y$ in the poisoned dataset ($|\mathcal{D}_{Y=s}| + k_{s}$). By the condition of the theorem, the counts for each class are equal: $|\mathcal{D}_{Y=s}'| = |\mathcal{D}_{Y=1-s}'|$. The total number of samples in the poisoned dataset is $|\mathcal{D}'| = |\mathcal{D}_{Y=s}'| + |\mathcal{D}_{Y=1-s}'| = 2|\mathcal{D}_{Y=s}'|$. Thus, the new class prior for $Y=s$ is: $$\mathbb{P}[Y=s] = \frac{|\mathcal{D}_{Y=s}'|}{|\mathcal{D}'_{total}|} = \frac{|\mathcal{D}_{Y=s}'|}{2|\mathcal{D}_{Y=s}'|} = 0.5.$$ A similar derivation holds for the new class prior for $Y=1-s$. Since both priors are equal to 0.5, their ratio is 1.
\end{proof}

\begin{lemma}[Group Posterior Divergence]
The ratio of group posteriors, $\frac{\mathbb{P}[S=s \mid Y=s]}{\mathbb{P}[S=s \mid Y=1-s]}$, diverges to infinity.
\label{lem:score_2}
\end{lemma}
\begin{proof}
To show this, we analyze the behavior of each posterior probability as the number of poisoned samples increases.

\begin{itemize}
    \item \textbf{Behavior of $\mathbb{P}[S=s \mid Y=s]$:} This term is estimated as $\frac{\mathcal{D}_{S=s, Y=s}}{\mathcal{D}_{Y=s}}$. With the addition of $k_s$ poisoned samples of the form $(\mathbf{x}, S=s, Y=s)$, both the numerator and denominator increases by $k_s$. As $k_s \to \infty$, this term approaches 1.

    \item \textbf{Behavior of $\mathbb{P}[S=s \mid Y=1-s]$:} This term is estimated as $\frac{\mathcal{D}_{S=s, Y=1-s}}{\mathcal{D}_{Y=1-s}}$. The numerator is unchanged since we only samples with the label matching the sensitive attribute group. However, the denominator, $N(Y=1-s)$, increases by $k_{1-s}$. As $k_{1-s} \to \infty$, the numerator remains constant while the denominator grows, causing this term to approach 0.
\end{itemize}

\noindent Since $\mathbb{P}[S=s \mid Y=s]$ will converge to 1 and $\mathbb{P}[S=s \mid Y=1-s]$ will converge to 0, the ratio also goes to infinity.
\end{proof}

\begin{lemma}[Feature-Label Independence]
For feature $X_i$, the ratio of the feature likelihood, $\frac{\mathbb{P}[X_i=x_i \mid Y=s]}{\mathbb{P}[X_i=x_i \mid Y=1-s]}$, approaches 1. 
\label{lem:score_3}
\end{lemma}
\begin{proof}
Recall the poisoning strategy ensures that, as the number of added samples approaches infinity, the features become uncorrelated with the label. That is, for each $X_i$ and all $y \in Y$:
$$\lim_{k \to \infty} \mathbb{P}[X_i=x_i \mid Y=y] = \mathbb{P}[X_i=x_i].$$

\noindent Thus, the ratio of the feature likelihood also approaches 1:
\[
\begin{aligned}
\lim_{k \to \infty} \frac{\mathbb{P}[X_i=x_i \mid Y=s]}{\mathbb{P}[X_i=x_i \mid Y=1-s]} &= \frac{\lim_{k \to \infty} \mathbb{P}[X_i=x_i \mid Y=s]}{\lim_{k \to \infty} \mathbb{P}[X_i=x_i \mid Y=1-s]} \\
&= \frac{\mathbb{P}[X_i=x_i]}{\mathbb{P}[X_i=x_i]} = 1.
\end{aligned}
\]
\end{proof}

\noindent The ratio of the scores $\frac{score_{NB}(\mathbf{x}, s, s)}{score_{NB}(\mathbf{x}, s, 1-s)}$ can be decomposed into the ratios of the class priors, the group posteriors, and the conditional feature likelihoods:
\[
    \frac{\mathbb{P}[Y=s] \mathbb{P}[S=s \mid Y=s]\prod_{i=1}^{K} \mathbb{P}[X_i= x_i \mid Y = s]}{\mathbb{P}[Y=1-s] \mathbb{P}[S=s \mid Y=1-s]\prod_{i=1}^{K} \mathbb{P}[X_i= x_i \mid Y = 1-s]}.
\]
Applying \Cref{lem:score_1}, \Cref{lem:score_2}, and \Cref{lem:score_3}, we can characterize the behavior as $k_s \to \infty$:

\begin{align*}
&\lim_{k \to \infty} \frac{\mathbb{P}[Y=s]}{\mathbb{P}[Y=1-s]} \frac{\mathbb{P}[S=s \mid Y=s]}{\mathbb{P}[S=s \mid Y=1-s]} \\
&\quad \times \prod_{i=1}^{K}\frac{\mathbb{P}[X_i= x_i \mid Y = s]}{\mathbb{P}[X_i= x_i \mid Y = 1-s]} \\
&= 1 \cdot \infty \cdot \prod_{i=1}^{K} 1 \\
&= \infty.
\end{align*}

\noindent Since $\infty > 1$, this poisoning strategy will converge to the maximally unfair behavior on the naive Bayes model.
\end{proof}

In \Cref{thm:nb_unfair}, we present a poisoning attack that, when given an unlimited budget, is guaranteed to force a naive Bayes classifier to exhibit maximally unfair behavior. The attack achieves this by carefully constructing the feature vector, increasing the correlation between the sensitive attribute and the label, and controlling the number of each type of sample added. This strategic addition of poisoned samples causes the posterior probabilities for the groups to diverge, while simultaneously neutralizing the influence of the class prior and feature likelihood terms. 

A critical component of this attack is the ability to construct feature vectors that asymptotically cause the likelihood ratio to converge to 1, $\frac{\mathbb{P}[X_i=x_i \mid Y=s]}{\mathbb{P}[X_i=x_i \mid Y=1-s]} \to 1$ (\Cref{lem:score_3}). This objective can be achieved through uniform sampling each feature in the feature vector $\mathbf{x}_s$, which we formalize as follows:

\begin{theorem}[Feature-Label Independence Through Uniform Feature Assignment]
Let $\mathcal{D}$ represent the clean dataset. An adversarial feature vector $\mathbf{x}_s = (x_{s,1}, \dots, x_{s,K})$ is constructed such that for each feature $X_j$, the value $x_{s,j}$ is drawn independently from a discrete uniform distribution over the set $V_j$. That is, for $v \in V_j$:
$$\mathbb{P}[x_{s,j} = v] = \frac{1}{|V_j|}.$$
As the number of these poisoned samples $k$ in the poisoned dataset $\mathcal{D}_p$ approaches infinity, the empirical distribution of feature $X_j$ in the augmented dataset $\mathcal{D}' = \mathcal{D} \bigcup \mathcal{D}_p$ becomes independent of the label $Y$. That is, for feature value $x_j \in V_j$ and any label $y \in Y$:
$$\lim_{k \to \infty} \mathbb{P}_{\mathcal{D}'}[X_j=x_j \mid Y=y] = \mathbb{P}_{\mathcal{D}'}[X_j=x_j].$$
\label{thm:unif_xs}
\end{theorem}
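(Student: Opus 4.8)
The plan is to show that adding $k$ poisoned samples, each with feature $X_j$ drawn i.i.d. uniform over $V_j$ and with label determined by the sensitive attribute, washes out any label-dependence in the empirical distribution of $X_j$ in the limit. First I would set up notation for the empirical (count-based) probabilities: write $n = |\mathcal{D}|$, $n_y = |\mathcal{D}_{Y=y}|$, and let $c_{j,v,y}$ be the number of clean samples with $X_j = v$ and $Y = y$. Among the $k$ poisoned samples, some number $k_y$ carry label $y$ (with $k_0 + k_1 = k$, and by the balancing condition inherited from \Cref{thm:nb_unfair} these grow proportionally, so in particular both $k_y \to \infty$); of those, the expected count with $X_j = v$ is $k_y / |V_j|$, and by the law of large numbers the empirical fraction of poisoned label-$y$ samples having $X_j = v$ converges almost surely to $1/|V_j|$.

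Next I would write the conditional empirical probability in the augmented dataset explicitly:
\[
\mathbb{P}_{\mathcal{D}'}[X_j = v \mid Y = y] = \frac{c_{j,v,y} + (k_y/|V_j|)(1 + o(1))}{n_y + k_y}.
\]
As $k_y \to \infty$ the clean contributions $c_{j,v,y}$ and $n_y$ are bounded and hence negligible, so this ratio converges to $\frac{1}{|V_j|}$. This holds for every label $y \in \{0,1\}$, so the two conditional distributions converge to the same limit, namely the uniform distribution on $V_j$. A parallel (slightly simpler) computation gives the marginal $\mathbb{P}_{\mathcal{D}'}[X_j = v] \to 1/|V_j|$ as well, since it is a convex combination of the two conditionals. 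Therefore $\lim_{k\to\infty} \mathbb{P}_{\mathcal{D}'}[X_j = v \mid Y = y] = \lim_{k\to\infty}\mathbb{P}_{\mathcal{D}'}[X_j = v]$, which is exactly the claimed asymptotic independence.

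The main subtlety — not a deep obstacle, but the place that needs care — is the interplay between $k$ (the total number of poisoned samples) and the per-label split $k_0, k_1$. The theorem statement phrases the limit purely in terms of $k \to \infty$, but the quantity $\mathbb{P}_{\mathcal{D}'}[X_j = v \mid Y = y]$ depends on $k_y$, not directly on $k$; I would invoke the balancing condition from \Cref{thm:nb_unfair} (or simply note that the construction keeps both $k_0$ and $k_1$ going to infinity with $k$) to guarantee $k_y \to \infty$ for each $y$, which is what makes the clean-data terms vanish. The other minor point is whether one wants an almost-sure statement (via the strong law of large numbers applied to the i.i.d. uniform draws) or a statement about expected empirical distributions; I would state it in expectation for cleanliness and remark that the a.s. version follows identically, so that the feature-likelihood-ratio consequence used in \Cref{lem:score_3} is justified regardless of the random realization.
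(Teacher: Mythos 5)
Your proof is correct and follows essentially the same route as the paper's: show that each empirical conditional $\mathbb{P}_{\mathcal{D}'}[X_j = v \mid Y = y]$ converges to $1/|V_j|$ as the poisoned counts grow (so the clean-data counts become negligible), then observe that the marginal is a convex combination of these conditionals and hence shares the same limit. Your two added remarks --- that the limit genuinely requires $k_y \to \infty$ for each label rather than just $k \to \infty$, and that the ``approximately $1/|V_j|$ of the added samples'' step is a law-of-large-numbers statement about the random uniform draws --- tighten points the paper's proof leaves implicit.
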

\begin{proof}
We prove this by showing that the empirical conditional probabilities on $\mathcal{D}'$ converge to the same value.

\begin{lemma}[Conditional Probability Converges to Uniform]
As the number of poisoned samples for label $y$, $k_y$, approaches infinity, the empirical conditional probability of feature $X_j$ on the augmented dataset $\mathcal{D}'$ converges to a constant value for each possible feature value, reflecting a uniform distribution:
$$\lim_{k_y \to \infty} \mathbb{P}_{\mathcal{D}'}[X_j=x_j \mid Y=y] = \frac{1}{|V_j|}.$$
\label{lem:cond_prob}
\end{lemma}
\begin{proof}
The empirical conditional probability is 
$$\mathbb{P}_{\mathcal{D}'}[X_j=x_j \mid Y=y] = \frac{|\mathcal{D}'_{X_j=x_j, Y=y}|}{|\mathcal{D}'_{Y=y}|}.$$
As $k_y$ goes to infinity, approximately $\frac{1}{|V_j|}$ of the $k_j$ samples added will have $X_j=x_j$ and $Y=y$. Thus, in the limit:
$$\lim_{k_y \to \infty}\frac{|\mathcal{D}'_{X_j=x_j, Y=y}| + k_j\frac{1}{|V_j|}}{|\mathcal{D}'_{Y=y}| + k_j} = \frac{1}{|V_j|}.$$
\end{proof}

\noindent The empirical marginal probability is
$$\mathbb{P}_{\mathcal{D}'}[X_j=x_j] = \sum_{y\in Y} \mathbb{P}_{\mathcal{D}'}[Y=y] \mathbb{P}_{\mathcal{D}'}[X_j=x_j \mid Y=y].$$
Using \Cref{lem:cond_prob} and the law of total probability, we show this probability converges to $\frac{1}{|V_j|}$ as the number of poisoned samples $k$ approaches infinity:

\begin{align*}
&\lim_{k \to \infty} \sum_{y\in Y} \mathbb{P}_{\mathcal{D}'}[X_j=x_j \mid Y=y]\mathbb{P}_{\mathcal{D}'}[Y=y] \\
&= \sum_{y\in Y} \left(\lim_{k \to \infty} \mathbb{P}_{\mathcal{D}'}[X_j=x_j \mid Y=y]\lim_{k \to \infty}\mathbb{P}_{\mathcal{D}'}[Y=y]\right) \\
&= \frac{1}{|V_j|} \sum_{y \in Y} \lim_{k \to \infty} \mathbb{P}_{\mathcal{D}'}[Y=y] \\
&= \frac{1}{|V_j|}.
\end{align*}

\noindent Thus, we have shown the marginal probability and conditional probabilities are the same at the limit.
\end{proof}

While \Cref{thm:unif_xs} provides a theoretical guarantee for the feature-label independence in the asymptotic limit, correlation between the features and the label may be more efficiently reduced via the following iterative strategy:
\begin{enumerate}
\item Train a model on the poisoned dataset generated so far.
\item Use the model's predictions on the training data to identify the most important adversarial samples and add them to the poisoned dataset.
\item Retrain the model on the augmented dataset.
\item Repeat until the poisoning budget is exhausted.
\end{enumerate}

The key idea of this strategy is to select feature vectors from the subset of the training data where the behavior is contrary to our objective (\Cref{lem:max_unfair}), $\mathcal{D}_{S=s, \hat{Y}=1-s}$, the set of samples from the protected group $s$ where the predicted outcome does not match the group. By injecting the samples into the training data with the label $Y=s$, the empirical probability $\mathbb{P}[X_i=x_i \mid Y=1-s]$ remains unchanged while $\mathbb{P}[X_i=x_i \mid Y=s]$ increases.

This strategy relies on the assumption that the majority of the samples in $\mathcal{D}_{S=s, \hat{Y}=1-s}$ is due to the feature vector $\mathbf{x}$ being is more correlated to $Y=1-s$ than $Y=s$. For these samples, this implies that the conditional probabilities are, on average, biased toward the undesired label: $\mathbb{P}[X_i=x_i \mid Y=s] < \mathbb{P}[X_i=x_i \mid Y=1-s]$. Adding those samples into the training data with the label $Y=s$ results in the probabilities become closer, reducing the correlation.

Furthermore, this new strategy directly nudges the model towards the desired unfair behavior. By adding poisoned samples that target samples that have outcomes contrary to our desired behavior, we increase $score_{NB}(\mathbf{x}, s, s)$ while reduce $score_{NB}(\mathbf{x}, s, 1-s)$ (\Cref{lem:poison_nb}).

\begin{lemma}[Impact of Sample Injection on Naive Bayes Scores]
Adding a sample $(\mathbf{x}, s, y)$ results in  an increase in $score_{NB}(\mathbf{x}, s, y)$ and a reduction in $score_{NB}(\mathbf{x}, s, 1-y)$. 
\label{lem:poison_nb}
\end{lemma}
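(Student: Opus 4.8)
The plan is to read every probability in $score_{NB}$ as a plug-in (maximum-likelihood) estimate of the form ``count over count'' on the current dataset, and then track how each factor responds to appending one copy of $(\mathbf{x}, s, y)$. Write $n$ for the dataset size, $n_y$ for the number of examples with label $y$, $n_{s,y}$ for those with label $y$ and sensitive attribute $s$, and $n_{x_i,y}$ for those with label $y$ whose $i$-th feature equals $x_i$. The crucial structural fact is that injecting $(\mathbf{x},s,y)$ increments each of $n$, $n_y$, $n_{s,y}$, and every $n_{x_i,y}$ by exactly one, while leaving every count conditioned on the label $1-y$ completely untouched.

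For $score_{NB}(\mathbf{x}, s, y)$ I would argue factor by factor using the elementary inequality $\frac{a+1}{b+1} > \frac{a}{b}$ valid for $0 \le a < b$. The prior $\mathbb{P}[Y=y] = n_y/n$ becomes $(n_y+1)/(n+1)$, which is strictly larger whenever the other class is nonempty; the group term $\mathbb{P}[S=s\mid Y=y] = n_{s,y}/n_y$ and each feature likelihood $\mathbb{P}[X_i=x_i\mid Y=y] = n_{x_i,y}/n_y$ are handled the same way and are nondecreasing (strictly increasing unless the label-$y$ subpopulation is already constant in the relevant coordinate). Since $score_{NB}$ is a product of nonnegative factors, each nondecreasing and at least one strictly increasing, the product strictly increases.

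For $score_{NB}(\mathbf{x}, s, 1-y)$ the work is even lighter: the injected sample has label $y$, so $n_{1-y}$, $n_{s,1-y}$, and every $n_{x_i,1-y}$ are unchanged, hence $\mathbb{P}[S=s\mid Y=1-y]$ and all $\mathbb{P}[X_i=x_i\mid Y=1-y]$ are preserved exactly. Only the prior moves, from $n_{1-y}/n$ to $n_{1-y}/(n+1)$, a strict decrease. Multiplying the unchanged factors by the strictly smaller prior yields a strict decrease in $score_{NB}(\mathbf{x}, s, 1-y)$, completing the argument.

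The only real subtlety, and the step I expect to need the most care, is degeneracy: with Laplace (or other) smoothing, or if a class is empty or a conditional is already constant, the ``strict'' claims weaken to ``non-strict.'' I would either state the lemma for the unsmoothed estimator with both classes nonempty (and note that the strict version needs at least one non-constant conditional), or simply phrase the conclusion as ``nondecreasing / nonincreasing''; either way the downstream use only needs the monotone direction, so this does not affect the rest of the paper.
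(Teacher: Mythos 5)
Your proof is correct and follows essentially the same route as the paper's: interpret each factor of $score_{NB}$ as an empirical count ratio, observe that injecting $(\mathbf{x}, s, y)$ increments every count conditioned on $Y=y$ while leaving the $Y=1-y$ counts untouched, and conclude factor-by-factor that the first score rises and the second falls via the shrinking prior. Your handling of the degenerate cases (already-saturated conditionals, smoothing, empty classes) is actually more careful than the paper's, which asserts strict increases for every factor without qualification.
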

\begin{proof}
To prove this, we show that adding $(\mathbf{x}, s, y)$ increases the value of each term $score_{NB}(\mathbf{x}, s, y)$ while it reduces or does not effect each term in $score_{NB}(\mathbf{x}, s, 1-y)$.

\noindent Let the original dataset be $\mathcal{D}$ and the new dataset be $\mathcal{D}'=\mathcal{D} \bigcup \{(\mathbf{x},S=s,Y=s)\}$. The probabilities associated with each dataset will have $\mathcal{D}$ or $\mathcal{D}'$ subscripts respectively.

\begin{itemize}
    \item \textbf{Change to $score_{NB}(\mathbf{x}, s, y)$:} The empirical prior $\mathbb{P}_{\mathcal{D}'}[Y=y]$ increases as the new sample has $Y=y$. Similarly, the empirical conditional $\mathbb{P}_{\mathcal{D}'}[S=s \mid Y=y]$ also goes up since the new sample has $S=s$ and $Y=y$. The empirical likelihoods $\mathbb{P}_{\mathcal{D}'}[X_i=x_i \mid Y=y]$ for each feature $x_i$ in $\mathbf{x}$ also increase due to the new sample. These changes collectively increase the value of $\text{score}_{NB}(\mathbf{x}, s, y)$.
    \item \textbf{Change to $score_{NB}(\mathbf{x}, s, 1-y)$:} The empirical prior $\mathbb{P}_{\mathcal{D}'}[Y=1-y]$ decreases as the total number of samples in the denominator increases, while the count of samples where $Y=1-y$ remains the same. All other terms in the score are conditional on $Y=1=y$. Since no samples are added where $Y=1-y$, these terms are unaffected. These changes collectively decrease the value of $\text{score}_{NB}(\mathbf{x}, s, 1-y)$.
\end{itemize}
\end{proof} 


This strategy of utilizing the prediction of a surrogate model is more effective than the uniform, blind injection strategy. Since the samples are drawn from the true data distribution, they represent the types of inputs the model is likely to encounter in the real world. Furthermore, by prioritizing the reduction of the correlation of realistic samples, this strategy is able to achieve a strong attack with relatively few samples in practice compared to the strategy outlined in \Cref{thm:unif_xs}.

Using the intuition provided in \Cref{thm:nb_unfair}, we present a general strategy to generate poisoned samples. Given a sensitive attribute group $s \in S$ we want to add to, we set the sensitive attribute group $S$ and label $Y$ to be $s$ (Lines \ref{step:sample}-\ref{step:flip}) and generate the input feature vector $\mathbf{x}_s = (x_{s,1}, \dots, x_{s,K})$ by selecting $\mathbf{x}$ from $\mathcal{D}_{S=s, \hat{Y}=1-s}$ (Lines \ref{step:surrogate}-\ref{step:sampleend}). If $\mathcal{D}_{S=s, \hat{Y}=1-s} = \emptyset$, then $\mathbf{x}_s$ is generated by uniformly selecting a value from $|V_j|$ for feature $x_{s,j}$ (Lines \ref{step:unifstart}-\ref{step:unifend}).

\Cref{thm:unif_xs} shows this results in poisoned samples of the form $(\mathbf{x}_s, S=s, Y=s)$, where the feature vector $\mathbf{x}_s = (x_{s,1}, \dots, x_{s,K})$ is constructed so that, as the number of added samples approaches infinity, the feature distributions become uncorrelated with the label.

\subsection{Batch Group Selection}
So far, we have outlined how to generate poisoned samples given a sensitive attribute group. In this section, we outline how to select the poisoned samples' sensitive attribute group, motivated by an analysis of how adding specific samples affects SPD disparity in a naive Bayes classifier, under the simplifying assumption that non-sensitive features are independent of the label.

\begin{table*}[]
\resizebox{1.0\textwidth}{!}{
\begin{tabular}{@{}crlrrrrlrrrrlrrrr@{}}
\toprule
& & & \multicolumn{4}{c}{\textbf{German}} & & \multicolumn{4}{c}{\textbf{Drug}} & & \multicolumn{4}{c}{\textbf{COMPAS}} \\
\textbf{Metric}       & \textbf{Method} &  & \textbf{GNB}     & \textbf{LR}    & \textbf{DT}    & \textbf{KNN}     &  & \textbf{GNB}     & \textbf{LR}    & \textbf{DT}     & \textbf{KNN}     &  & \textbf{GNB}     & \textbf{LR}    & \textbf{DT}     & \textbf{KNN}     \\
\cmidrule[0.5pt](lr){0-0} \cmidrule[0.5pt](lr){2-2} \cmidrule[0.5pt](lr){4-7} \cmidrule[0.5pt](lr){9-12}  \cmidrule[0.5pt](lr){14-17}

\multirow{5}{*}{\textbf{$\Delta$ Acc}} &  \textbf{RAA-P}  &  & \textbf{-0.030} & \textcolor{lightgray}{0.025} & \textcolor{lightgray}{-0.015} & \textcolor{lightgray}{-0.005} &  & \textbf{-0.031} & \textbf{-0.057} & \textcolor{lightgray}{-0.057} & \textbf{-0.070} &  & \textbf{-0.015} & \textbf{-0.003} & \textcolor{lightgray}{-0.013} & \underline{-0.053} \\ 
& \textbf{NRAA-P} &  & \textcolor{lightgray}{-0.105} & \textbf{0.010} & \textbf{-0.070} & \textcolor{lightgray}{-0.020} &  & \underline{-0.075} & \underline{-0.088} & \underline{-0.135} & \underline{-0.078} &  & \underline{-0.032} & \underline{-0.028} & \textcolor{lightgray}{-0.019} & \underline{-0.053}   \\
& \textbf{RAA-F}  &  & \underline{-0.065} & \underline{-0.065} & \underline{-0.100} & \textcolor{lightgray}{-0.015} &  & -0.112 & -0.145 & \textbf{-0.132} & \textbf{-0.070} &  & -0.065 & -0.134 & \textcolor{lightgray}{-0.017} & -0.055  \\
& \textbf{NRAA-F} &  & \textcolor{lightgray}{-0.105} & \textbf{0.010} & \textbf{-0.070} & \textcolor{lightgray}{-0.020} &  & \underline{-0.075} & \underline{-0.088} & \underline{-0.135} & \underline{-0.078} &  & \underline{-0.032} & \underline{-0.028} & \textcolor{lightgray}{-0.019} & \underline{-0.053} \\ 
& \textbf{Ours} &  &  -0.410 & -0.235 & -0.215 & \textbf{\underline{-0.185}} &  & -0.405 & -0.244 & -0.205 & -0.249 &  & -0.419 & -0.281 & \textbf{\underline{-0.306}} & \textbf{-0.044} \\ 

\midrule

\multirow{5}{*}{\textbf{$\Delta$ SPD}} & \textbf{RAA-P}  &  & -0.023 & \textcolor{lightgray}{-0.013} & \textcolor{lightgray}{-0.142} & \textcolor{lightgray}{-0.108} &  & 0.146 & 0.184 & \textcolor{lightgray}{-0.106} & 0.025 &  & -0.032 & 0.155 & \textcolor{lightgray}{-0.072} & 0.017 \\
& \textbf{NRAA-P} &  & \textcolor{lightgray}{-0.023} & 0.030 & -0.055 & \textcolor{lightgray}{-0.127} &  & 0.285 & 0.218 & -0.041 & 0.030 &  & -0.088 & 0.195 & \textcolor{lightgray}{-0.065} & 0.016 \\
& \textbf{RAA-F}  &  & \underline{0.046} & \underline{0.372} & \underline{-0.041} & \textcolor{lightgray}{-0.083} &  & \underline{0.340} & \underline{0.385} & \underline{0.027} & \underline{0.035} &  & \underline{0.259} & \underline{0.608} & \textcolor{lightgray}{-0.057} & \underline{0.044} \\
& \textbf{NRAA-F} &  & \textcolor{lightgray}{-0.023} & 0.030 & -0.055 & \textcolor{lightgray}{-0.127} &  & 0.285 & 0.218 & -0.041 & 0.030 &  & -0.088 & 0.195 & \textcolor{lightgray}{-0.065} & 0.016  \\ 
& \textbf{Ours}  &  & \textbf{0.273} & \textbf{0.782} & \textbf{0.637} & \textbf{\underline{0.152}} &  & \textbf{0.776} & \textbf{0.787} & \textbf{0.575} & \textbf{0.672} &  & \textbf{0.776} & \textbf{0.768} & \textbf{\underline{0.775}} & \textbf{0.114} \\
\midrule

\multirow{5}{*}{\textbf{$\Delta$ EOD}} & \textbf{RAA-P} &  & 0.154 & \textcolor{lightgray}{-0.018} & \textcolor{lightgray}{-0.058} & \textcolor{lightgray}{-0.116} &  & 0.200 & 0.251 & \textcolor{lightgray}{-0.003} & 0.073 &  & 0.132 & 0.336 & \textcolor{lightgray}{-0.050} & 0.034 \\
& \textbf{NRAA-P} &  & \textcolor{lightgray}{-0.047} & 0.050 & 0.174 & \textcolor{lightgray}{-0.144} &  & 0.327 & 0.270 & 0.007 & 0.064 &  & 0.046 & 0.367 & \textcolor{lightgray}{-0.043} & 0.034 \\
& \textbf{RAA-F} &  & \underline{0.306} & \underline{0.370} & \underline{0.214} & \textcolor{lightgray}{-0.010} &  & \underline{0.419} & \underline{0.439} & \underline{0.113} & \underline{0.082} &  & \underline{0.373} & \underline{0.665} & \textcolor{lightgray}{-0.036} & \underline{0.053} \\
& \textbf{NRAA-F} &  & \textcolor{lightgray}{-0.047} & 0.050 & 0.174 & \textcolor{lightgray}{-0.144} &  & 0.327 & 0.270 & 0.007 & 0.064 &  & 0.046 & 0.367 & \textcolor{lightgray}{-0.043} & 0.034 \\
& \textbf{Ours} & &  \textbf{0.453} & \textbf{0.828} & \textbf{0.765} & \textbf{\underline{0.324}} &  & \textbf{0.825} & \textbf{0.834} & \textbf{0.729} & \textbf{0.749} &  & \textbf{0.775} & \textbf{0.770} & \textbf{\underline{0.775}} & \textbf{0.159} \\

\bottomrule
\end{tabular}
}
    \caption{Performance comparison of attack methods when adding $|\mathcal{D}|$ poisoned samples to the dataset. The highest value is in \textbf{bold} and the second-highest is \underline{underlined}. Values are marked in gray and ignored if both SPD and EOD are reduced.}
    \label{tab:main}
\end{table*}

\begin{lemma}[Simplified Prediction Probability under Feature Independence]
Given a naive Bayes classifier where all non-sensitive features are independent of the label, that is, $\mathbb{P}[X_i=x_i|Y=y] = \mathbb{P}[X_i=x_i]~\forall x_i \in V_i,~y \in \{0, 1\}$, the probability of predicting the positive class ($\hat{Y}=1$) for a sample with a given sensitive attribute value $S=s$ is
$$\mathbb{P}(\hat{Y}=1 \mid S=s) = \mathbb{I}\left[score'_{NB}(s, 1) > score'_{NB}(s, 0)\right]$$
where $\mathbb{I}[\cdot]$ is the indicator function.
\label{lem:simplified_pred}
\end{lemma}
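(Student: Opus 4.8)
The plan is to collapse the naive Bayes decision rule into one that no longer references $\mathbf{x}$ at all, and then to observe that $\mathbb{P}(\hat{Y}=1\mid S=s)$ is the conditional expectation of a quantity that is constant on the event $\{S=s\}$. First I would expand
$score_{NB}(\mathbf{x},s,y) = \mathbb{P}[Y=y]\,\mathbb{P}[S=s\mid Y=y]\prod_{i=1}^{K}\mathbb{P}[X_i=x_i\mid Y=y]$
and apply the feature-independence hypothesis $\mathbb{P}[X_i=x_i\mid Y=y]=\mathbb{P}[X_i=x_i]$ for all $y$. Under this hypothesis the factor $\prod_{i=1}^{K}\mathbb{P}[X_i=x_i\mid Y=y]$ equals $c(\mathbf{x}):=\prod_{i=1}^{K}\mathbb{P}[X_i=x_i]$, which is nonnegative and does not depend on $y$. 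Hence $score_{NB}(\mathbf{x},s,y)=c(\mathbf{x})\cdot score'_{NB}(s,y)$, where $score'_{NB}(s,y):=\mathbb{P}[Y=y]\,\mathbb{P}[S=s\mid Y=y]$ is the reduced score appearing in the statement.

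Next, since $c(\mathbf{x})\geq 0$ is common to both labels, the arg-max over $y\in\{0,1\}$ of $score_{NB}(\mathbf{x},s,y)$ coincides with the arg-max of $score'_{NB}(s,y)$ whenever $c(\mathbf{x})>0$; the degenerate case $c(\mathbf{x})=0$ occurs with probability zero under the data distribution (and in any case can be absorbed into the tie-breaking convention). Therefore the predicted label is $\hat{Y}=m(\mathbf{x},s)=\mathbb{I}\!\left[score'_{NB}(s,1)>score'_{NB}(s,0)\right]$, which depends only on $s$ and not on $\mathbf{x}$. Here I adopt the convention that ties in the arg-max are resolved toward the negative class, so that the prediction matches the strict inequality inside the indicator.

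Finally I would condition on the sensitive attribute: $\mathbb{P}(\hat{Y}=1\mid S=s)=\mathbb{E}\!\left[\mathbb{I}[\hat{Y}=1]\mid S=s\right]$. By the previous step, on the event $\{S=s\}$ the random variable $\mathbb{I}[\hat{Y}=1]$ is the constant $\mathbb{I}\!\left[score'_{NB}(s,1)>score'_{NB}(s,0)\right]$, so its conditional expectation equals that same constant, which is exactly the claimed expression. The only point requiring any care is the tie-breaking and degeneracy bookkeeping — namely arguing that the sets of $\mathbf{x}$ with $c(\mathbf{x})=0$ or with $score'_{NB}(s,1)=score'_{NB}(s,0)$ do not disturb the stated equality — and this is handled cleanly once the arg-max tie-breaking rule is fixed to agree with the strict inequality; everything else is the cancellation of the feature likelihoods and the observation that conditioning on $S=s$ renders $\hat{Y}$ deterministic.
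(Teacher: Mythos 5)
Your proof is correct and follows essentially the same route as the paper's: use feature independence to cancel the likelihood factors so the decision depends only on $s$, observe that the prediction is therefore constant on the event $\{S=s\}$, and conclude that the conditional probability collapses to the indicator. You are in fact slightly more careful than the paper, which asserts the reduction to $score'_{NB}$ without writing out the factorization $score_{NB}(\mathbf{x},s,y)=c(\mathbf{x})\cdot score'_{NB}(s,y)$ and does not address ties or the degenerate case $c(\mathbf{x})=0$; your bookkeeping on those points is a welcome addition but does not change the argument.
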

\begin{proof}
For a given sensitive attribute value $s$, this condition is either true for all samples or false for all samples, as the simplified scores are constant with respect to the non-sensitive features. Therefore, the probability of predicting $\hat{Y}=1$ for a sample with attribute $S=s$ is simply the proportion of samples in the dataset with that attribute for which the classifier predicts $\hat{Y}=1$:
$$\mathbb{P}(\hat{Y} = 1 \mid S = s) = \frac{|\mathcal{D}_{\hat{Y}=1, S=s}|}{|\mathcal{D}_{S=s}|}$$

Since the decision is the same for all samples with $S=s$, this simplifies to:
$$\frac{\sum_{\mathbf{x} \in \mathcal{D}_{S=s}} \mathbb{I}[score'_{NB}(s, 1) > score'_{NB}(s, 0)]}{|\mathcal{D}_{S=s}|}$$

The indicator function's value is constant for all samples where $S=s$, so we can take it outside the summation:
$$\frac{|\mathcal{D}_{S=s}| \cdot \mathbb{I}[score'_{NB}(s, 1) > score'_{NB}(s, 0)]}{|\mathcal{D}_{S=s}|}$$
Which simplifies to:
$$\mathbb{I}[score'_{NB}(s, 1) > score'_{NB}(s, 0)].$$
\end{proof}

SPD is defined as follows:
$$\left| \mathbb{P}(\hat{Y} = 1 \mid S = 1) - \mathbb{P}(\hat{Y} = 1 \mid S = 0) \right|.$$
\Cref{lem:simplified_pred} shows that in our simplified scenario, $\mathbb{P}(\hat{Y} = 1 \mid S = 1)$ can only take on binary values, posing a challenge for analysis. To overcome this, we approximate this with a continuous score: the margin of preference. We define the margin of preference for the positive class for group $s$ as the difference between the simplified scores:
\begin{align*}
  M_s &= score'_{NB}(s, 1) - score'_{NB}(s, 0) \\
  &= \mathbb{P}[Y=1] \mathbb{P}[S=s \mid Y=1] - \mathbb{P}[Y=0] \mathbb{P}[S=s \mid Y=0].  
\end{align*}

A change in SPD occurs only when the sign of a margin flips (i.e., when $M_s$ crosses zero). Since we are interested in the sensitivity of the classifier's decisions, we propose a continuous alternative to the SPD, which we will call the Continuous Disparity Margin (CDM), defined as the difference between the margins of the two sensitive groups:
\begin{equation}
    CDM = |M_1 - M_0|.
    \label{eq:cdm}
\end{equation}

Let $CDM_{n, s}$ represent the CDM after adding $n$ samples with $S=s$ and $Y=s$. Our strategy for selecting the sensitive attribute group to add $n$ poisoned samples is to first calculate $CDM_{n, 1}$ and $CDM_{n, 0}$, then add the samples to the group corresponding with the higher value.

\subsection{Dataset Selection}
The final poisoned dataset adopted is the one that that achieves the best score on a separate hold-out test set (Line \ref{step:sel}). The candidate dataset's score is determined by considering its impact on both performance and fairness. $\mathcal{F}_{perf}$ denotes the set of functions to evaluate performance metrics on the model output. $\mathcal{F}_{fair}$ is the set of functions to evaluate fairness metrics. Each function maps model predictions $\hat{Y}$ and true labels $Y$ to a value in the range $[0, 1]$, where higher values favor the attacker (better model performance or greater fairness disparity between groups). We define the aggregated output of each set as the mean of its constituent functions, denoted $\overline{\mathcal{F}_{perf}}$ and $\overline{\mathcal{F}_{fair}}$ respectively.

The goal of the candidate dataset score is to assign a higher value to a candidate dataset that balances the trade-off between the performance and fairness of the model. However, in practice, the trade-off is not static. The relative importance of performance and fairness may change depending on the scenario:
\begin{enumerate}
    \item If the utility of the compromised model is worse than that of the clean model, ($m_{clean}$), the victim may not deployed the model. The attacker may want to avoid this situation by trading off attack effectiveness for performance.
    \item Meanwhile, if the performance is equivalent or better than the performance of the clean model, the above concern is alleviated and the attacker may want to focus on the effectiveness of the attack.
\end{enumerate}

\begin{figure*}[t]
    \centering
    \begin{subfigure}[b]{0.48\linewidth}
        \includegraphics[trim={0 0.4cm 0 0},clip,width=\linewidth]{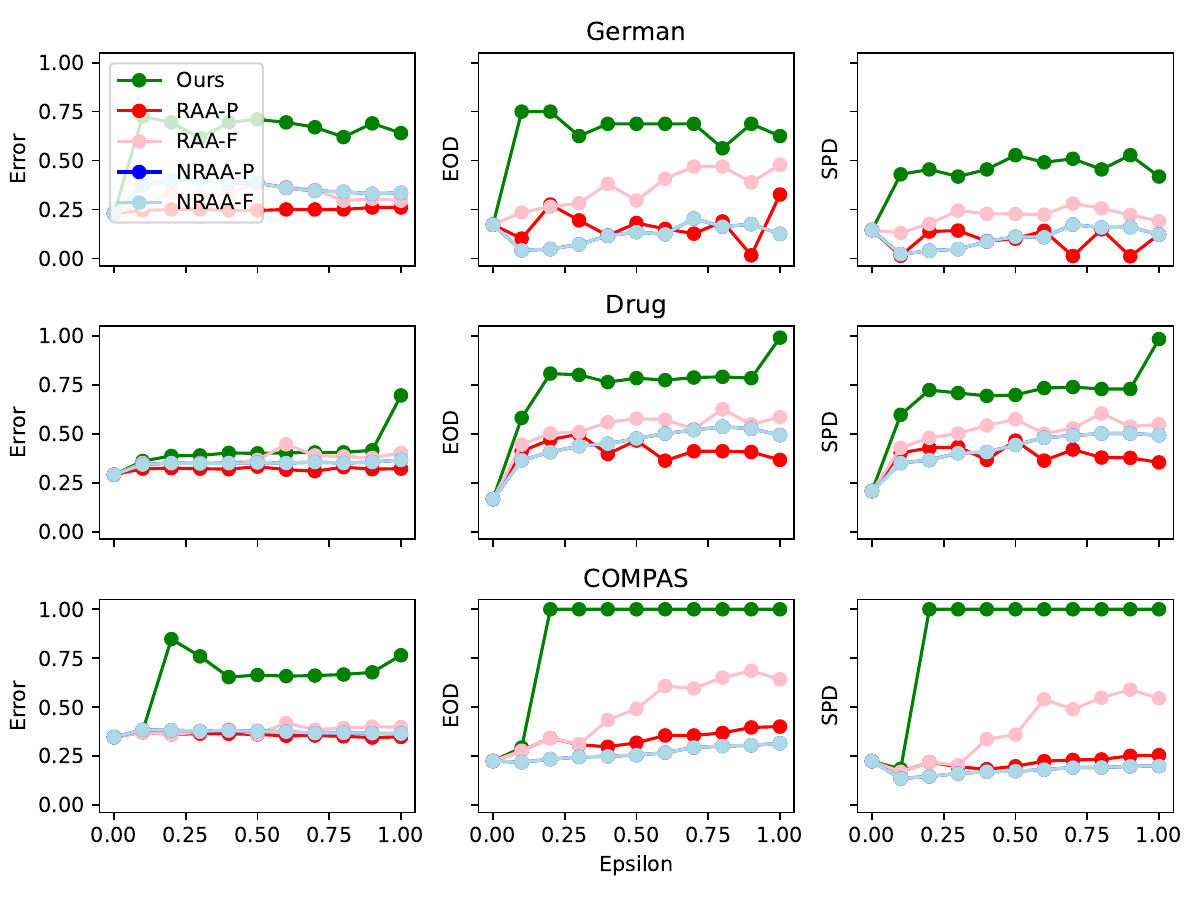}
        \caption{Gaussian Naive Bayes} \label{fig:gnb}
    \end{subfigure}%
    \hfill
    \begin{subfigure}[b]{0.48\linewidth}
        \includegraphics[trim={0 0.4cm 0 0},clip,width=\linewidth]{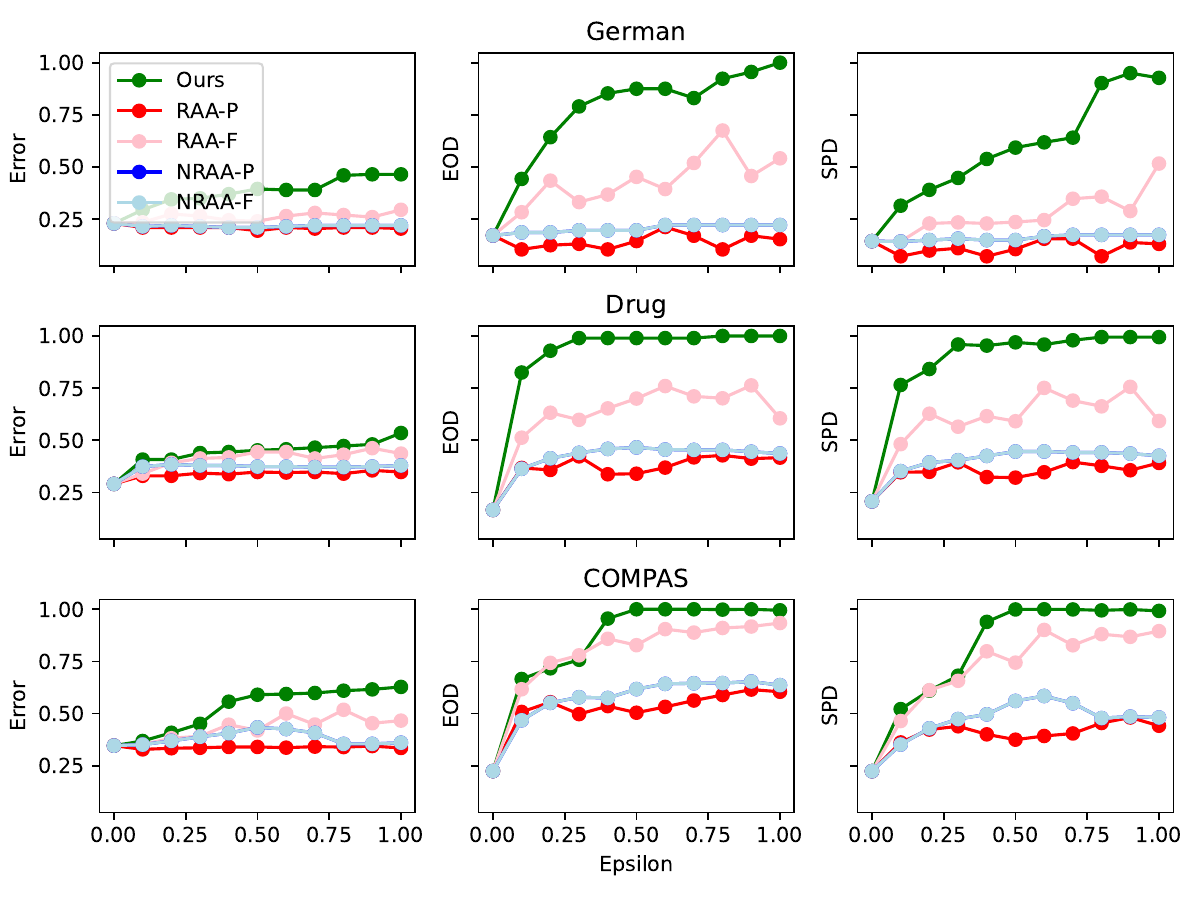}
        \caption{Logistic Regression} \label{fig:lr}
    \end{subfigure}%
    \hfill
    \begin{subfigure}[b]{0.48\linewidth}
        \includegraphics[trim={0 0.4cm 0 0},clip,width=\linewidth]{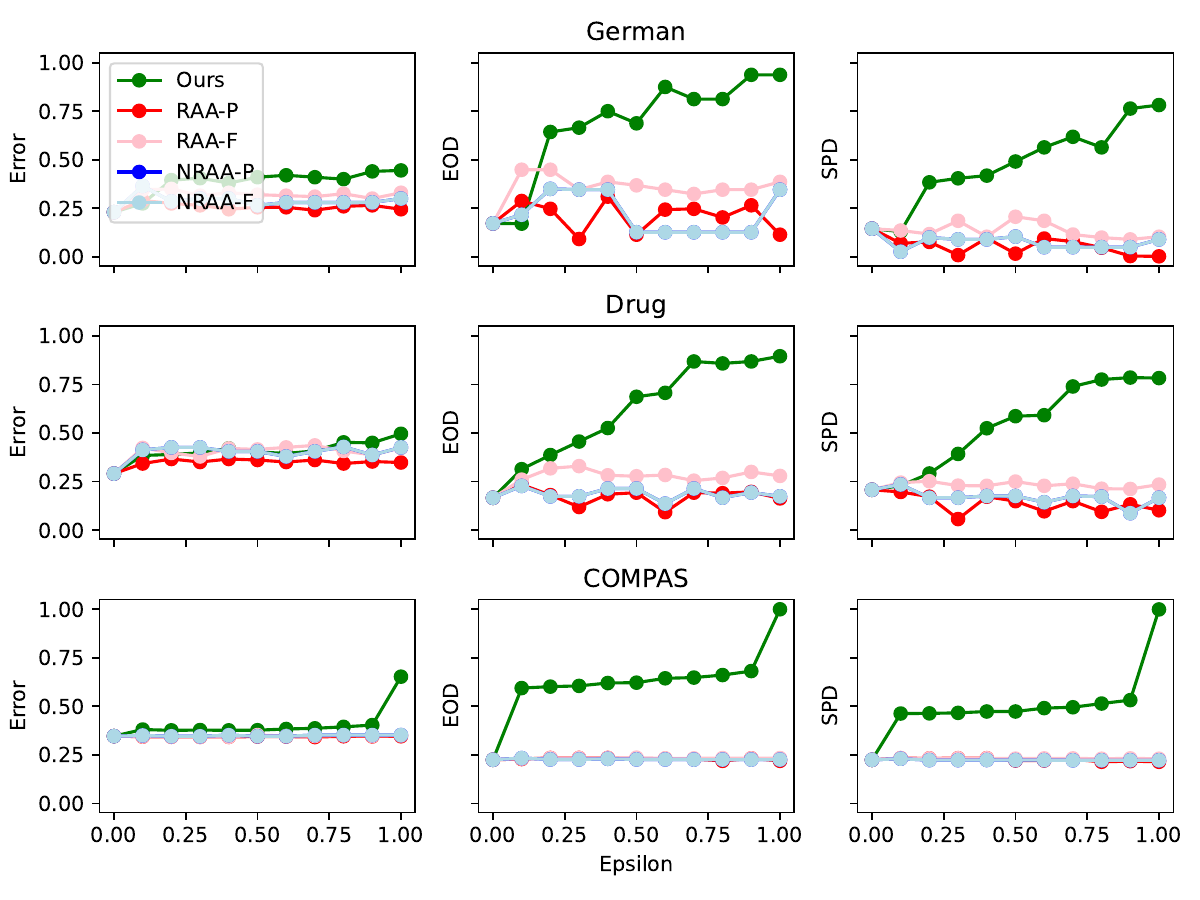}
        \caption{Decision Tree} \label{fig:dt}
    \end{subfigure}%
    \hfill
    \begin{subfigure}[b]{0.48\linewidth}
        \includegraphics[trim={0 0.4cm 0 0},clip,width=\linewidth]{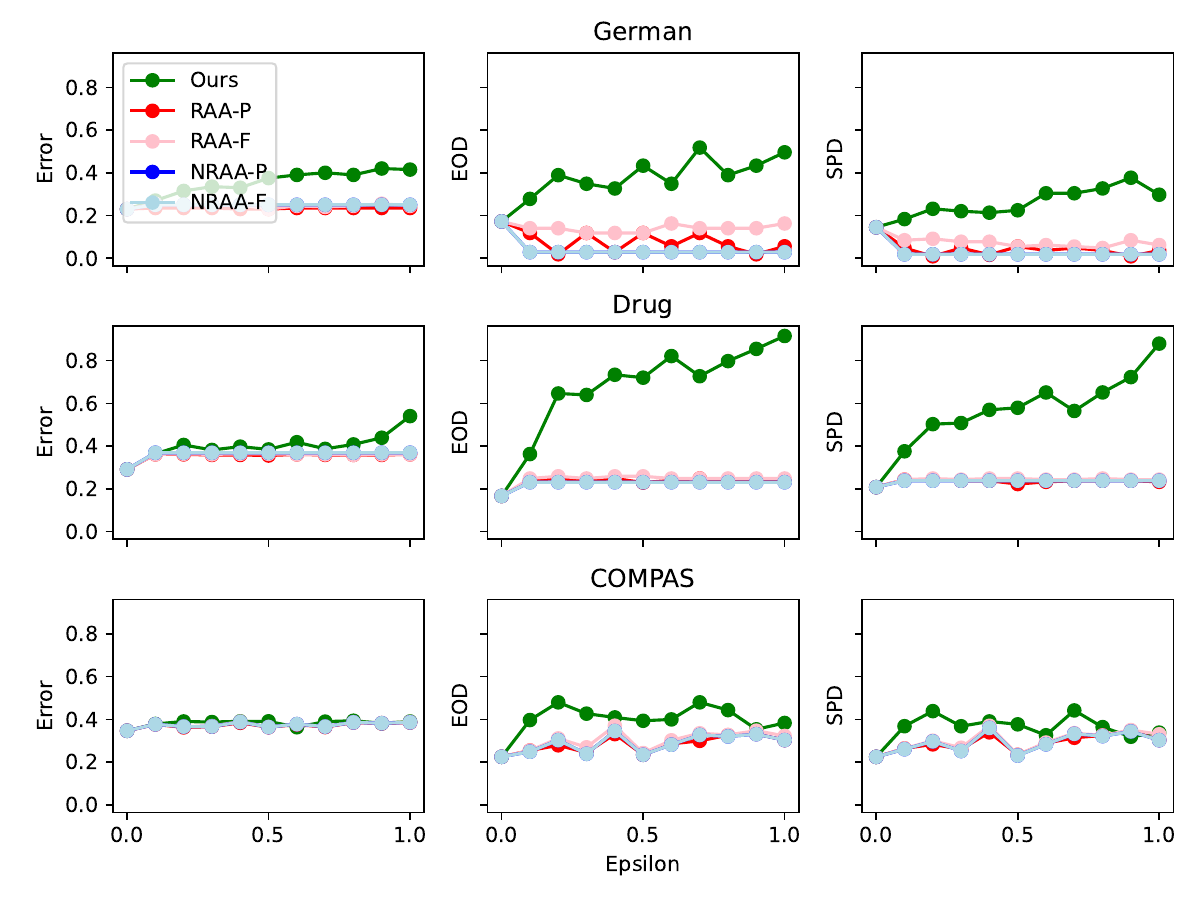}
        \caption{K-Nearest Neighbor} \label{fig:knn}
    \end{subfigure}%
    \caption{Our method consistently achieves the most significant degradation fairness compared to other attack methods against the base models over a varying number of poisoned samples.} \label{fig:all_models}
\end{figure*}
\raggedbottom

Let the percentage change of a metric be notated as
$$f_{\Delta\mathcal{F}}(\hat{Y}, Y) := \frac{\overline{\mathcal{F}}(\hat{Y}, Y) - \overline{\mathcal{F}}(m_{clean}(\mathcal{D}), Y)}{\overline{\mathcal{F}}(m_{clean}(\mathcal{D}), Y)}$$
The candidate dataset score is defined as follows:
\begin{align}
\label{eq:selscore}
f(\hat{Y}, Y) =&~f_{\Delta\mathcal{F}_{fair}}(\hat{Y}, Y) + \alpha f_{\Delta\mathcal{F}_{perf}}(\hat{Y}, Y)~+ \nonumber \\
&\beta \cdot \mathbb{I}\left[f_{\Delta\mathcal{F}_{perf}}(\hat{Y}, Y) < 0\right] \cdot f_{\Delta\mathcal{F}_{perf}}(\hat{Y}, Y)
\end{align}
where $\alpha \geq 0$ controls the fairness-performance trade-off and $\beta \geq 0$ controls the additional weight applied when performance is negatively impacted by the attack (as indicated by the condition $\mathbb{I}[f_{\Delta\mathcal{F}_{perf}}(\hat{Y}, Y) < 0]$).

This scoring function $f$ aligns with our goals. When the performance is equivalent or better than the performance of the clean model, the fairness-performance trade-off is controlled by $\alpha$. However, when the performance is worse than the clean model's (case 2), the score $f$ is $f_{\Delta\mathcal{F}_{fair}}(\hat{Y}, Y) + (\alpha + \beta) f_{\Delta\mathcal{F}_{perf}}(\hat{Y}, Y)$. Since $\alpha, \beta \geq 0$, the performance term may be upweighed, reducing the relative influence of the fairness term.

\section{Experiment}
\begin{table}[]
\resizebox{0.5\textwidth}{!}{
\begin{tabular}{@{}crccc@{}}
\toprule
&& \textbf{German} & \textbf{Drug} & \textbf{COMPAS} \\ \cmidrule[0.5pt](lr){3-3} \cmidrule[0.5pt](lr){4-4} \cmidrule[0.5pt](l){5-5}
\textbf{\# Features} && 58 & 13 & 8 \\ \midrule
\multirow{3}{*}{\textbf{\# Samples}} & \textbf{Overall} & 800 & 1500 & 5771 \\
& \textbf{S=1} & 255 & 753 & 1095\\
& \textbf{S=0} & 545 & 747 & 4676\\ \midrule
\multirow{3}{*}{\textbf{\% Y=1}} & \textbf{Overall} & 29.87\% & 55.07\% & 55.05\% \\
& \textbf{S=1} & 36.47\% & 63.88\% & 65.48\% \\
& \textbf{S=0} & 26.79\% & 46.19\% & 52.61\% \\

\bottomrule
\end{tabular}
}

    \caption{Dataset statistics}
    \label{tab:data_stats}
\end{table}
\subsection{Datasets}
We use three widely used datasets in the field of algorithm fairness: the \textbf{German} credit dataset \cite{misc_statlog_(german_credit_data)_144}, the \textbf{drug} use prediction dataset \cite{drug}, and the \textbf{COMPAS} recividism dataset \cite{compas}.

We evaluate each method using {\em gender} as the sensitive attribute. Each dataset is normalized and divided into an 80-20 train-test split. The German dataset is the smallest, followed by the drug dataset, and the COMPAS dataset is the largest. 
\Cref{tab:data_stats} is a table of the statistics associated with each dataset used in the evaluations.
\hide{
\paragraph{German Dataset} The German prediction task \cite{misc_statlog_(german_credit_data)_144} is to leverage individual financial and banking details of an individual to classify whether the individual is a good or bad credit risk.
\paragraph{COMPAS Dataset} The COMPAS prediction task \cite{compas} is to leverage information about an individual's criminal history and demographics to identify whether the individual would re-offend within two years.
\paragraph{Drug Dataset} The Drug prediction task \cite{drug} is to leverage information about an individual to identify whether the individual had ever consumed cocaine.
}
\subsection{Metrics}
The performance metric we evaluate is accuracy. For the fairness metric, we evaluate using two measures of group fairness: statistical parity difference (SPD) \cite{Kamiran2012} and equalized odds difference (EOD) \cite{NIPS2016_9d268236}.

SPD measures the difference in favorable outcomes between the advantaged and disadvantaged groups.
\[
\text{SPD} = \left| \mathbb{P}(\hat{Y} = 1 \mid S = 1) - \mathbb{P}(\hat{Y} = 1 \mid S = 0) \right|,
\]

whereas EOD measures the maximum difference in the true and false positive rates between the advantaged and disadvantaged groups
\begin{align*}
\text{EOD} = &\max_{y \in \{0,1\}}  \Big| \mathbb{P}(\hat{Y} = 1 \mid Y = y, S = 1) \\
& - \mathbb{P}(\hat{Y} = 1 \mid Y = y, S = 0) \Big|.
\end{align*}

\begin{figure*}[]
    \centering
    \includegraphics[trim={1.5cm 0 2cm 1.5cm},clip,width=1\linewidth]{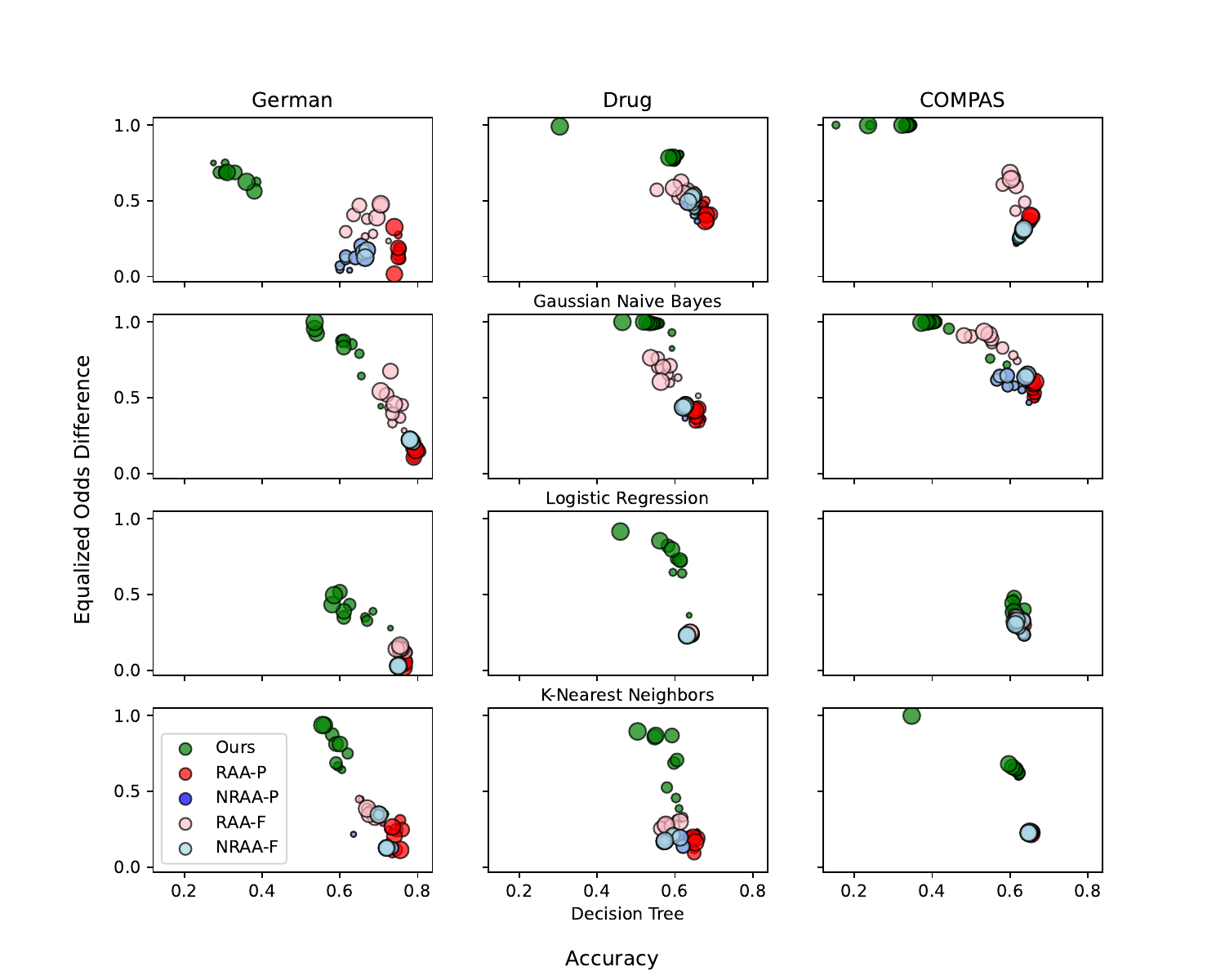}
    \caption{Our method consistently offers the best balance, achieving notably higher fairness degradation when comparing the trade-off between the fairness attack methods for each dataset and base model across all epsilon.} \label{fig:tradeoff}
\end{figure*}

\subsection{Methods}
We compare against two methods.

The first method is the {\em Random Anchoring Attack (RAA)} \cite{Mehrabi_Naveed_Morstatter_Galstyan_2021}. A positive and negative sample is selected uniformly at random from the disadvantaged and advantaged group and the label flipped. Then, this sample is projected to the feasible set. The samples are duplicated so that the overall proportions of positive and negative samples are unchanged.

The second method is the {\em Non-Random Anchoring Attack (NRAA)} \cite{Mehrabi_Naveed_Morstatter_Galstyan_2021} which differs from RAA in that instead of selecting samples uniformly at random, the selected samples are the ones with the largest number of nearby samples with the same sensitive attribute group and label.

To show the range of trade-offs achieved by each fairness attack method, we employ both a performance-based selection method and a fairness-based selection method as proposed in prior work to select the evaluation dataset \cite{Mehrabi_Naveed_Morstatter_Galstyan_2021, liu2024towards, nalmpantis2022re}. The performance-based selection method scores each dataset on the accuracy on an evaluation dataset. The fairness-based selection method assigns a score to each candidate datasets that is the sum of the SPD and EOD. Both selection methods then select the candidate dataset that has the highest score. 

To denote the selection method used, we append a suffix to the fairness attack's name: ``+P'' if the performance-based selection method is used, and ``+F'' if the fairness-based selection selection method is used.

\subsection{Results}  We compare the performance of our method against the existing methods over four machine learning classification models: Gaussian naive Bayes (GNB), logistic regression (LR), decision tree (DT), and K-nearest neighbors (KNN). We vary the proportions of adversary-injected samples (as controlled by $\varepsilon$) and apply each method to generate 100 candidate datasets. The final dataset with is selected with the specified selection criteria (the performance-based, fairness-based, or our trade-off-based selection criteria). For our dataset selection score function, we set the trade-off parameters $\alpha$ and $\beta$ of the dataset selection score function to be 0.5 to consider both performance and fairness. The effects of varying these hyperparameter are explored in the ablation.

\Cref{tab:main} shows the trade-off of each method over all combinations of datasets and models examined (visually summarized in \Cref{fig:summary}). This table compares the change in accuracy and fairness (SPD and EOD) to the performance of the clean model for $\varepsilon = 1$. Notably, while our method generally has the largest reduction in accuracy than the other methods, our method is able to increase (SPD, EOD) significantly more than the existing methods with the same poisoning budget. In fact, when $\varepsilon = 1$, our method is the only one that reaches maximum EOD value of 1 for four of the twelve settings (33\%) while the other methods never achieve maximum EOD.

We find that, compared to the other fairness attack methods, our method empirically achieves greater reduction in both fairness metrics. We investigate the relationship between the number of poisoned samples as controlled by $\varepsilon$ and the evaluation metrics for each base model: Gaussian naive Bayes (\Cref{fig:gnb}), logistic regression (\Cref{fig:lr}), decision tree (\Cref{fig:dt}), and K-nearest neighbors (\Cref{fig:knn}). At most values of $\varepsilon$, our method is generally able to achieve a significantly stronger effect on the fairness measures compared to the other fairness attack methods. 


For the logistic regression and Gaussian naive Bayes models, \name\ is generally competitive to RAA-F with greater unfairness at the expense of a higher rate of error. For the decision tree and K-nearest neighbor models, \name\ is able to effectively reduce the fairness measures while the other attacks generally has negligible to positive effect on those measures.



\Cref{fig:tradeoff} illustrates the trade-off between accuracy and fairness (EOD) for each dataset. Across the model architectures and the datasets, \name\ consistently demonstrate a trend of maintaining comparable trade-offs to the other fairness attack strategies while achieving significantly higher EOD values. For many of the settings, \name\ generally maintains a comparable accuracy range while achieving significantly higher EOD values, thus demonstrating a clear and substantial advantage.


\begin{figure}
    \centering
    \includegraphics[trim={0 0 0 0},clip,width=1\linewidth]{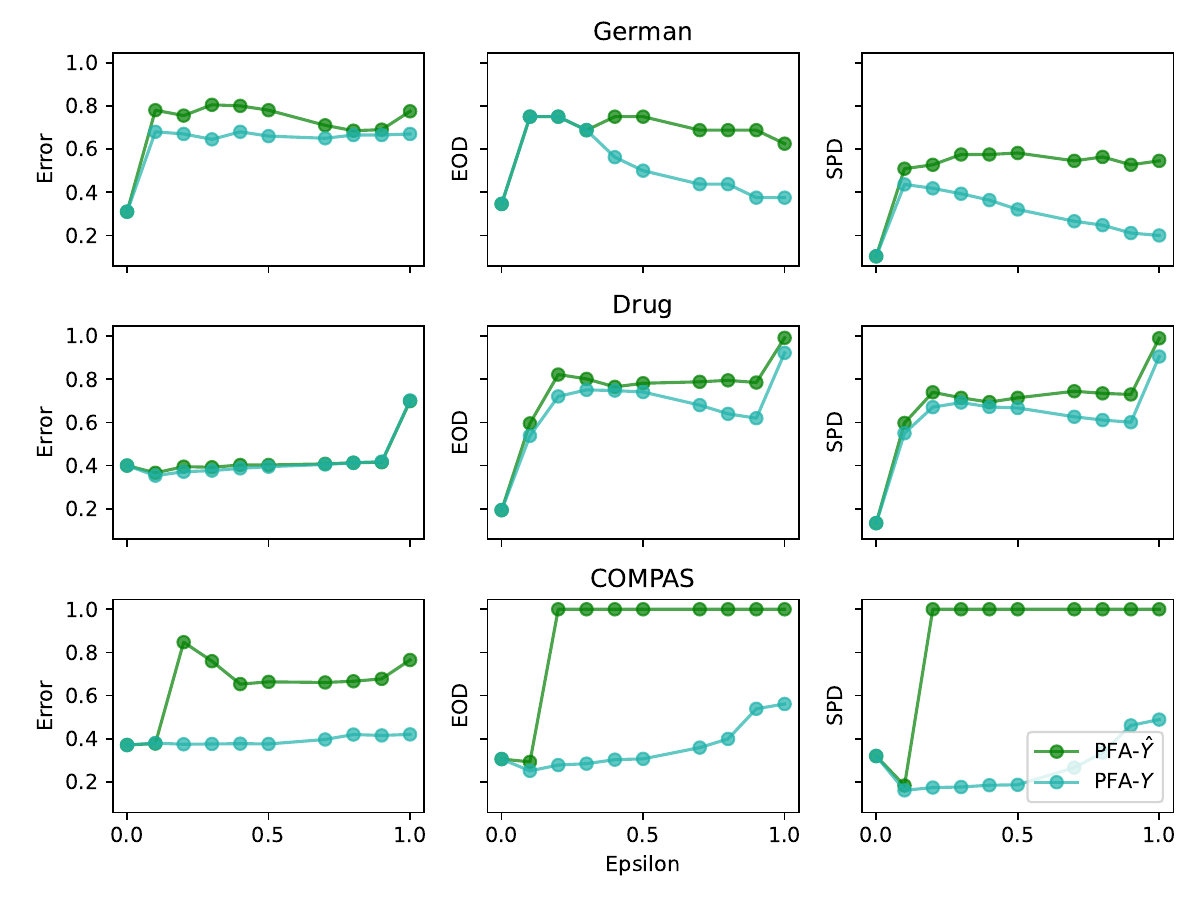}
    \caption{Comparison of \name\ using $\hat{Y}$ versus $Y$ against a Gaussian naive Bayes model over a varying number of poisoned samples. Using $\hat{Y}$ generally results in a stronger attack.} \label{fig:yyhat}
\end{figure}

\begin{figure}
    \centering
    \includegraphics[trim={3cm 2cm 3cm 2.5cm},clip,width=1\linewidth]{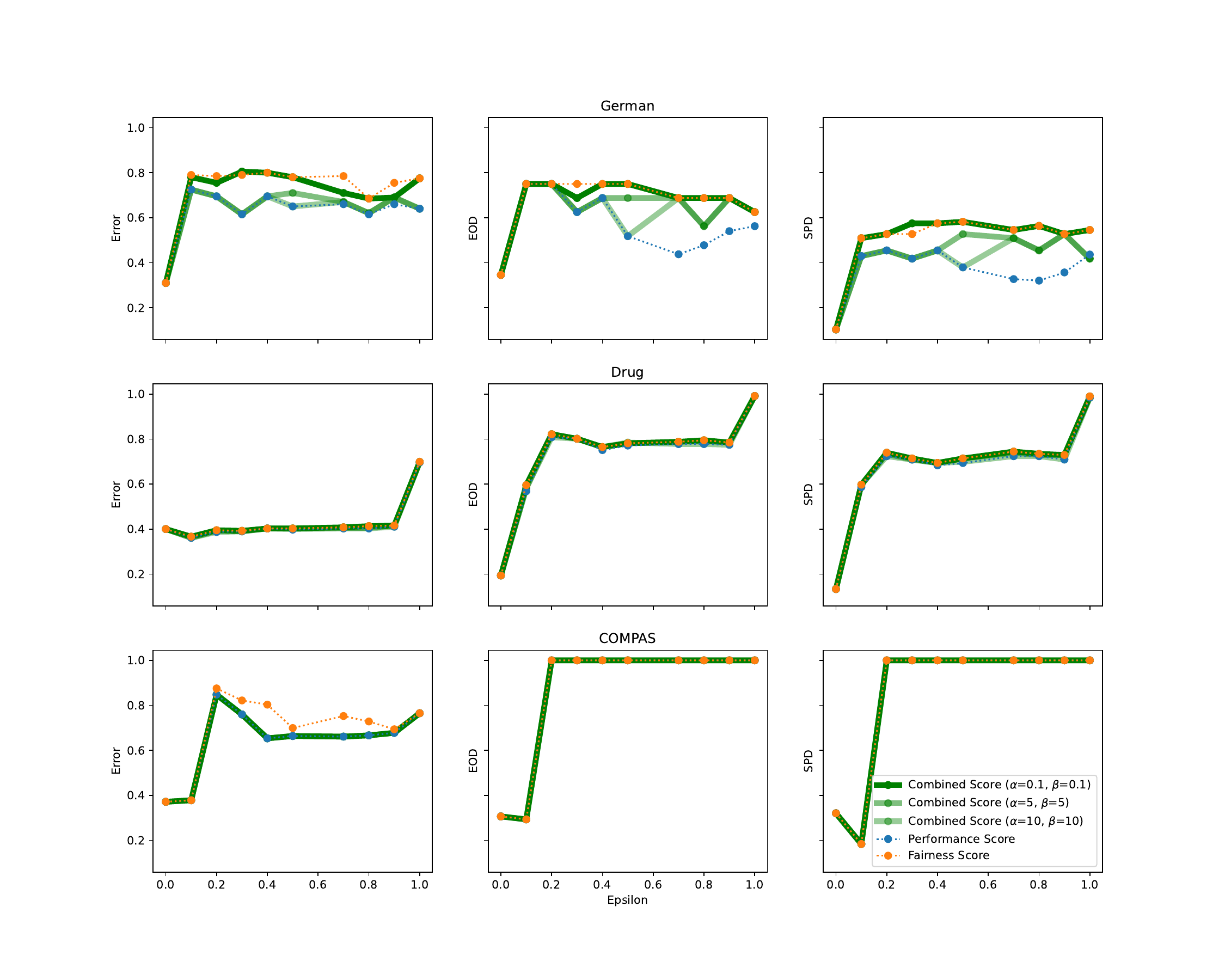}
    \caption{Comparison of candidate dataset selection methods against a Gaussian naive Bayes model over a varying number of poisoned samples. Our selection method allows users to select a dataset with their desired trade-offs.} \label{fig:gnbsel}
\end{figure}

\subsection{Ablation}
\name\ has three main components: (1) sampling from a subset controlled by $\hat{Y}$, (2) using the subset sampling heuristic rather than the uniform sample generation strategy, and (3) varying the proportion sampled from each subset based on the intermediary results of a surrogate model. 

We first investigate the first aspect: the use of $\hat{Y}$ versus $Y$. Results using the Gaussian naive Bayes model is shown in \Cref{fig:yyhat}
(results for all models in Appendix \ref{app:fullmodels}). Using $\hat{Y}$ generally results in higher error and unfairness, resulting in a stronger attack with a good trade-off. On the German and Drug datasets, errors are similar while on the COMPAS dataset, error is significantly higher, but so is the SPD and EOD values. For small values of $\varepsilon$, the performance of \name\ using $\hat{Y}$ and $Y$ are similar, but as $\varepsilon$ grows, so does the advantage of using $\hat{Y}$ as both SPD and EOD is higher with that variation.

For poisoned sample generation, we compare \name\'s method of using a heuristic of selecting real samples and setting the label to match the sensitive attribute group value versus generating feasible samples (as presented in \Cref{thm:unif_xs}). Results using the Gaussian naive Bayes model is shown in \Cref{fig:poison}
(results for all models in Appendix \ref{app:fullmodels}). Generally, the trends are similar between the two variants. The heuristic-based generation strategy generally has similar or higher error than the feasible generation strategy but much higher unfairness (EOD and SPD values), resulting in better tradeoffs.

\begin{figure}
    \centering
    \includegraphics[trim={0 0 0 0},clip,width=1\linewidth]{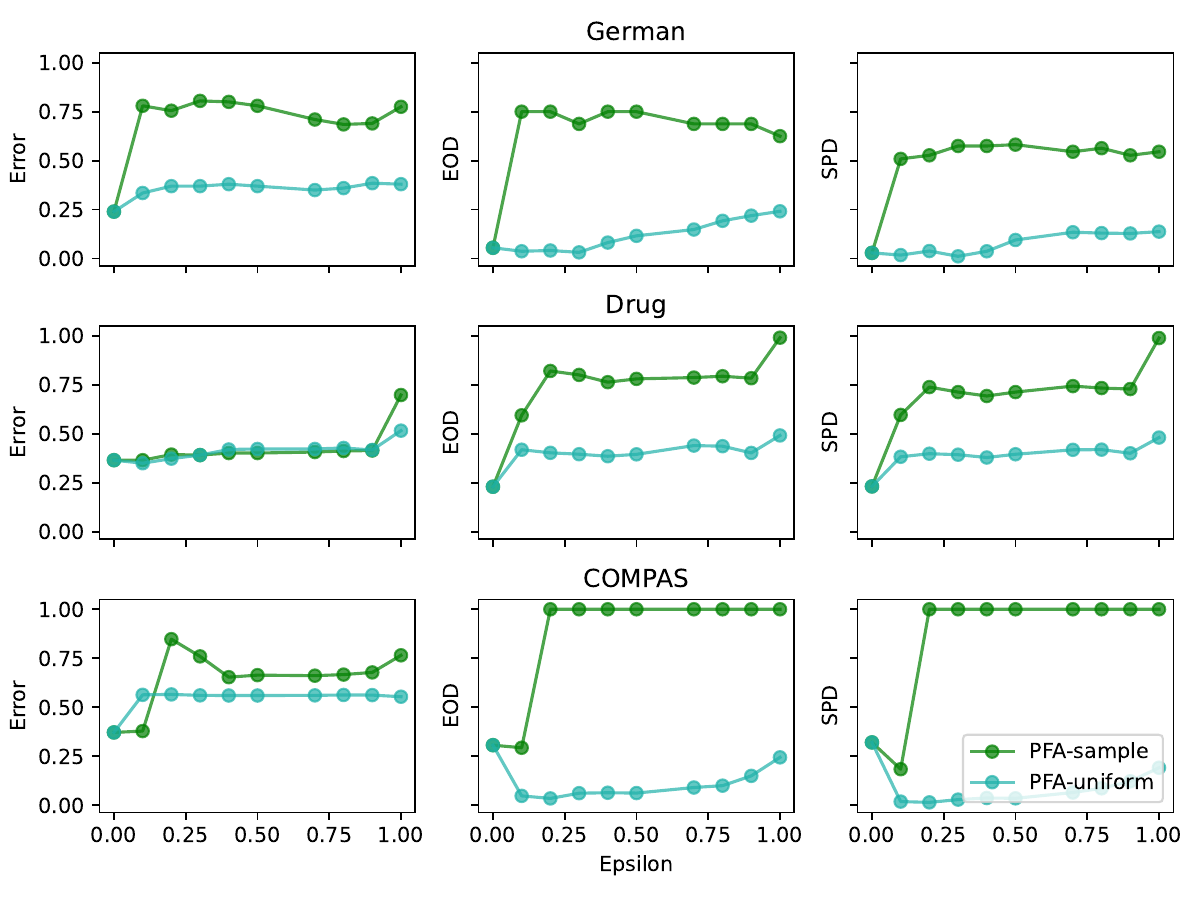}
    \caption{Comparison of \name\ generating poisoned samples with subset sampling versus feasible sample generation against a Gaussian naive Bayes model over a varying number of poisoned samples. Using real samples for the poisoned samples generally results in a stronger attack.} \label{fig:poison}
\end{figure}

The final aspect of \name\ is the dataset selection strategy, where we introduce a new candidate dataset scoring method \Cref{eq:selscore} to control the trade-off between performance and fairness. We compare the impact of the dataset selection criteria on the evaluation metrics for a Gaussian naive Bayes model for each dataset in \Cref{fig:gnbsel}. The figures for the remaining base models can be found in Appendix \ref{app:fullmodels}.

We compare the effect of our selection method with the selection methods introduced in prior work: the performance selection method \cite{Mehrabi_Naveed_Morstatter_Galstyan_2021} and the fairness selection method \cite{nalmpantis2022re}. When the parameters ($\alpha,~\beta$) are small (0.1), the fairness values of the model selected by our selection method exactly matches those selected by the fairness selection method while larger values of $\alpha$ and $\beta$ results in the metrics of the selected samples getting closer to those of the performance selection method. The comparison on the COMPAS dataset shows an advantage of the combined method over just performance or fairness. Since there are candidate models with the same high fairness values but varying error rates, the combined method with ($\alpha,~\beta = 0.1$) is able to select the candidate with a high fairness value matching the candidate selected with the fairness selection method while having a low error rate that matches the candidate selected by the performance selection method.

These results show that our selection strategy provides a controllable mechanism for users to tune their preference for the trade-off between the extent of performance degradation and the strength of the fairness attack. This tunability allows for a more nuanced and user-driven approach to generating poisoned datasets.

\section{Conclusion}
Our work highlights a novel and stealthy method that can degrade equitable outcomes while maintaining overall performance for any classifier, highlighting the vulnerability of fairness in real-world deployment. The key idea is to iteratively generate poisoned datasets using a surrogate model to guide targeted sample additions based on a principled decision rule maximizing disparity in group-conditional probabilities. Through extensive benchmarking on four base models, we demonstrate that our method is significantly more effective at attacking fairness compared to existing approaches.

These findings underscore the importance of actively monitoring fairness metrics and motivate future research to develop robust defenses against such attacks to ensure that machine learning models remain fair and trustworthy in practice.
Future work can explore robustness against a variety of different settings such as against fairness-enhanced models, a surrogate model mismatch, different modalities, and extend theoretical analysis to other model families.

\bibliographystyle{IEEEtran}
\bibliography{citations}
\begin{appendices}

\section{Results Across All Models}\label{app:fullmodels}

\Cref{fig:sel_score_abl} shows the impact of each of the three dataset selection criteria applied to our method on the evaluation metrics for each of the base models.

\begin{figure*}[t]
    \centering
    \begin{subfigure}[b]{0.48\linewidth}
        \includegraphics[trim={3cm 0 3cm 0},clip,width=\linewidth]{sections/figures/GNB_score.pdf}
        \caption{Gaussian Naive Bayes} 
    \end{subfigure}
    \hfill
    \begin{subfigure}[b]{0.48\linewidth}
        \includegraphics[trim={3cm 0 3cm 0},clip,width=\linewidth]{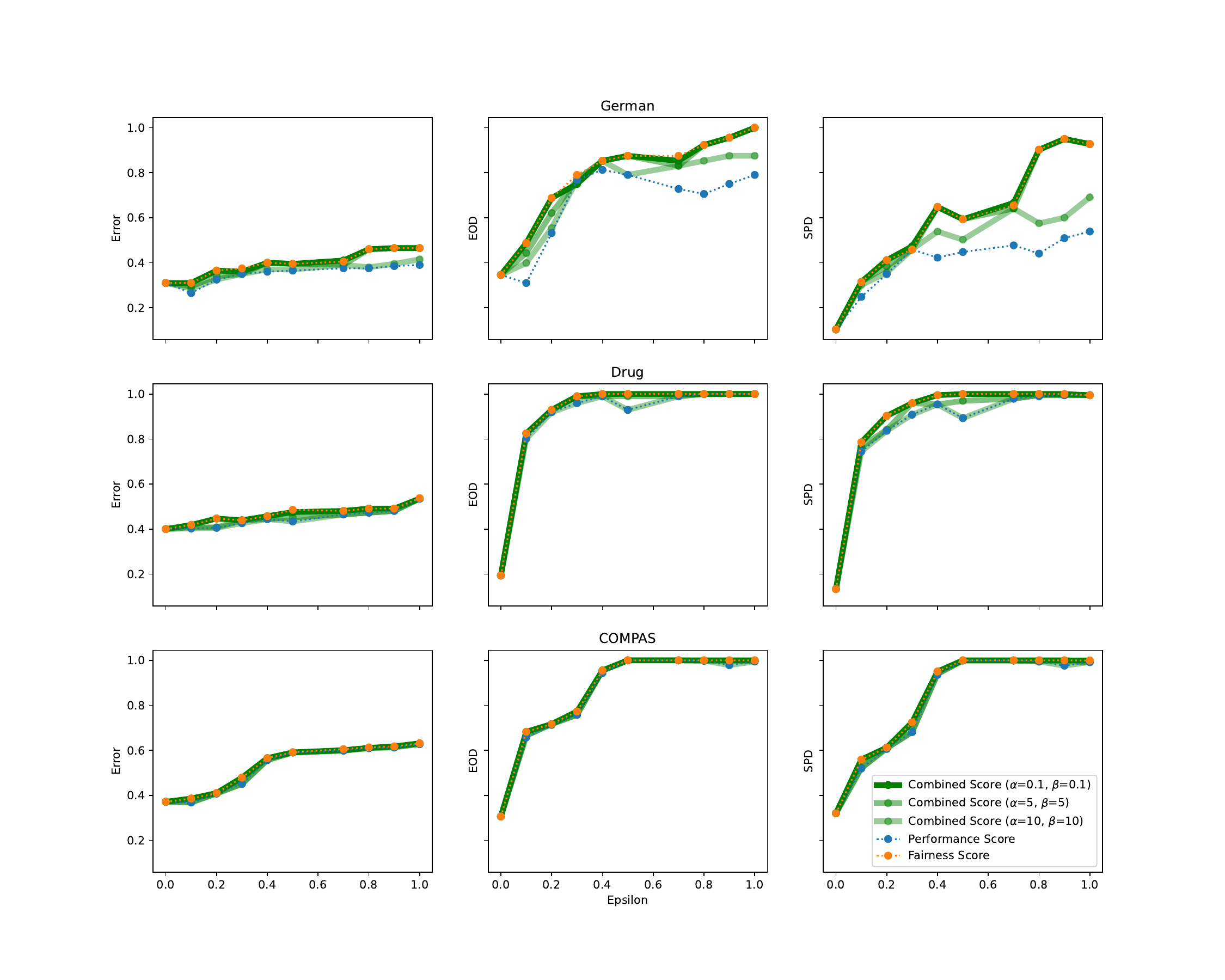}
        \caption{Logistic Regression} 
    \end{subfigure}
    
    \vfill
    
    \begin{subfigure}[b]{0.48\linewidth}
        \includegraphics[trim={3cm 0 3cm 0},clip,width=\linewidth]{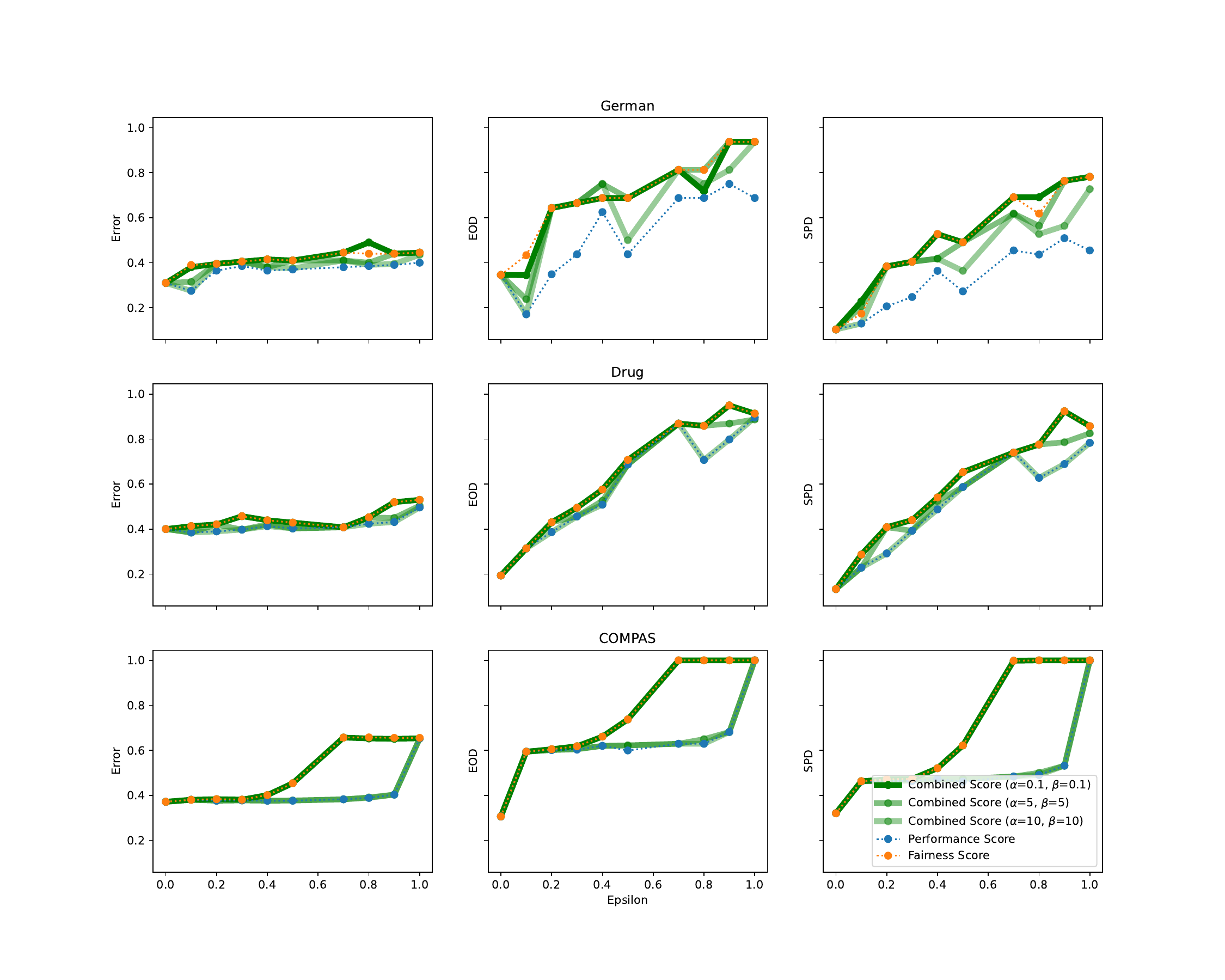}
        \caption{Decision Tree}
    \end{subfigure}
    \hfill
    \begin{subfigure}[b]{0.48\linewidth}
        \includegraphics[trim={3cm 0 3cm 0},clip,width=\linewidth]{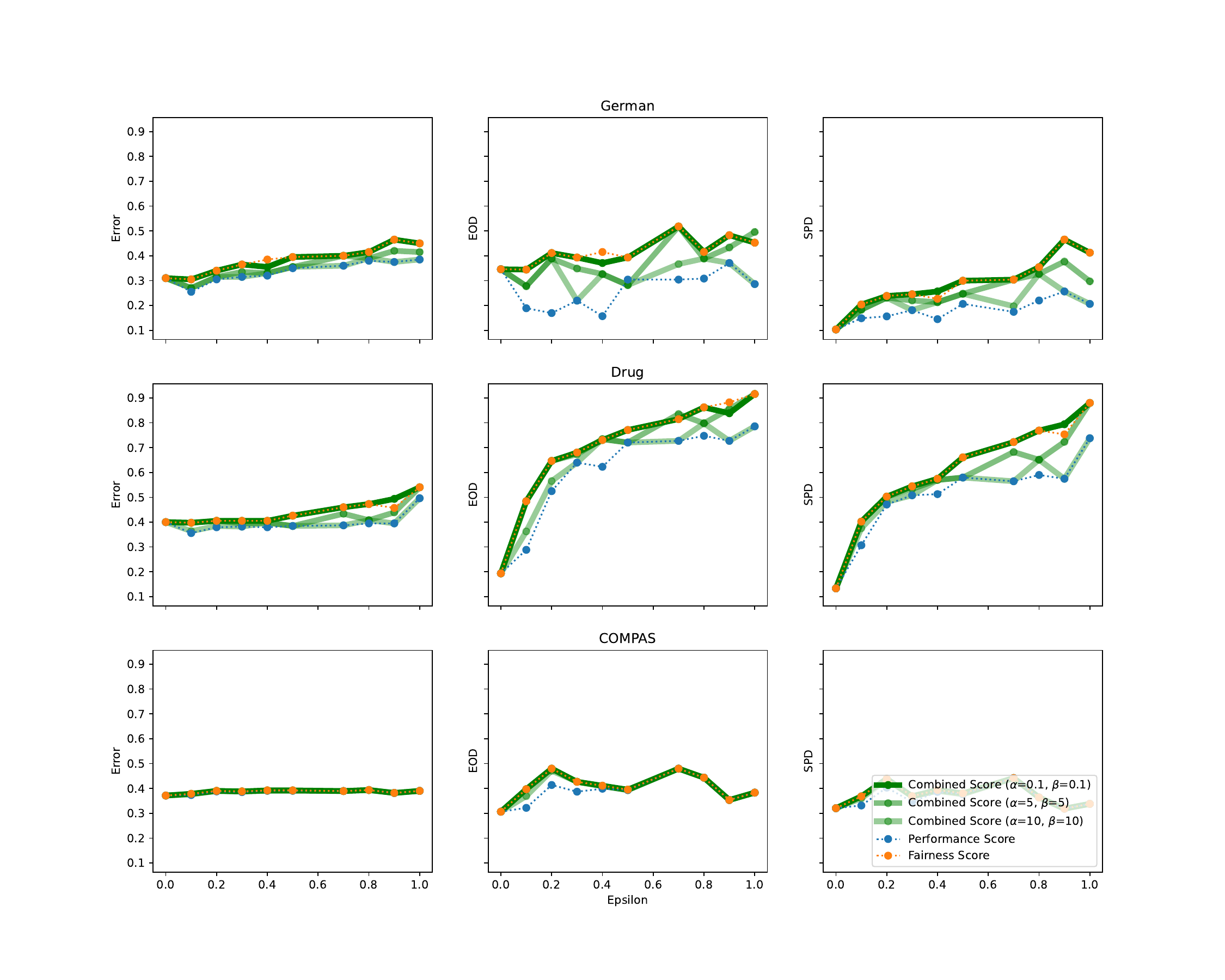}
        \caption{K-Nearest Neighbors}
    \end{subfigure}
    \caption{Comparison of the dataset selection strategies against the base models over a varying number of poisoned samples.} \label{fig:sel_score_abl}
\end{figure*}
\raggedbottom

\Cref{fig:all_models_abl} shows the impact of using $\hat{Y}$ versus $Y$ in \name\ across each of the three dataset selection criteria applied to our method for each of the base models.

\begin{figure*}[t]
    \centering
    \begin{subfigure}[b]{0.48\linewidth}
        \includegraphics[trim={0 0 0 0},clip,width=\linewidth]{sections/figures/GNB_abl.pdf}
        \caption{Gaussian Naive Bayes} \label{fig:gnbabl}
    \end{subfigure}
    \hfill
    \begin{subfigure}[b]{0.48\linewidth}
        \includegraphics[trim={0 0 0 0},clip,width=\linewidth]{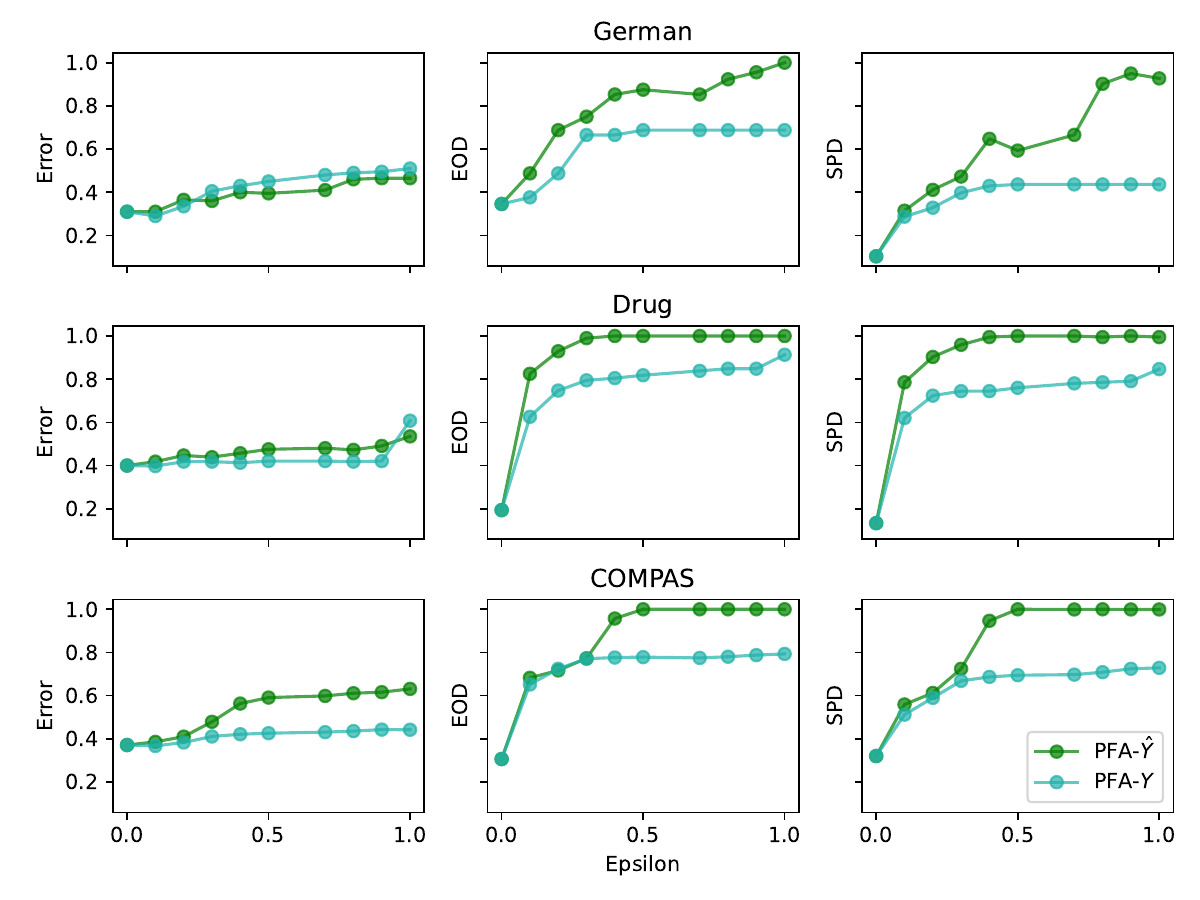}
        \caption{Logistic Regression} \label{fig:lrabl}
    \end{subfigure}
    
    \vfill
    
    \begin{subfigure}[b]{0.48\linewidth}
        \includegraphics[trim={0 0 0 0},clip,width=\linewidth]{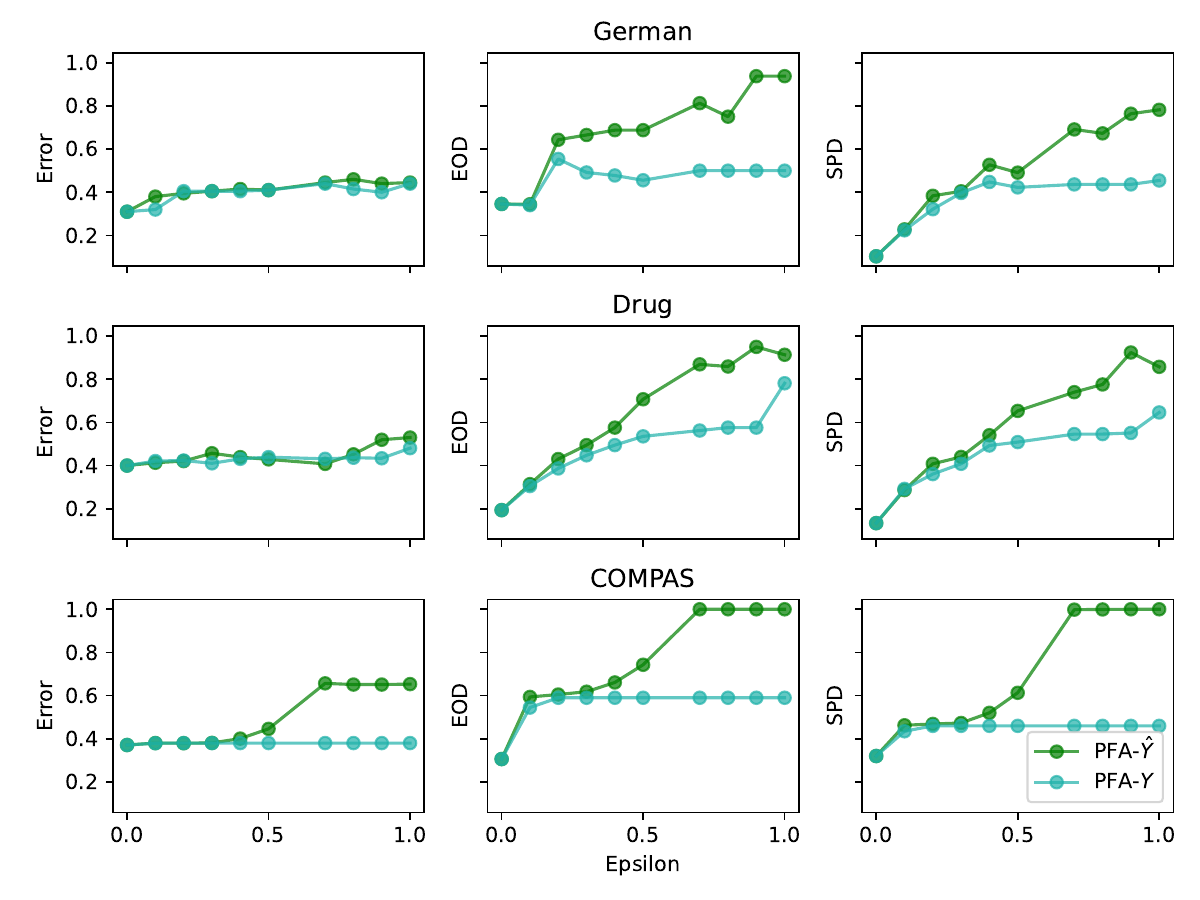}
        \caption{Decision Tree} \label{fig:dtabl}
    \end{subfigure}
    \hfill
    \begin{subfigure}[b]{0.48\linewidth}
        \includegraphics[trim={0 0 0 0},clip,width=\linewidth]{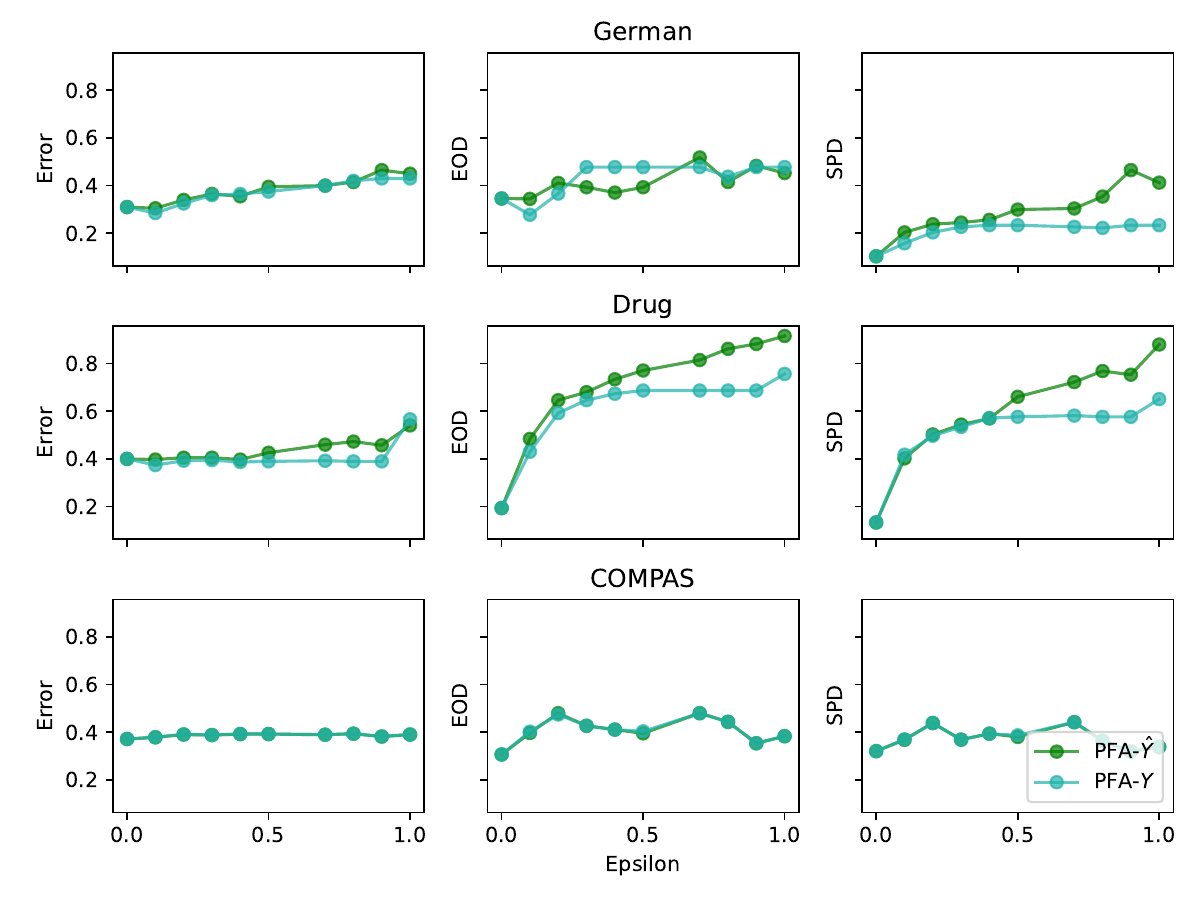}
        \caption{K-Nearest Neighbors} \label{fig:knnabl}
    \end{subfigure}
    \caption{Comparing our approach using $\hat{Y}$ versus $Y$ shows that the use of a surrogate model is essential for achieving a strong adversarial effect.} \label{fig:all_models_abl}
\end{figure*}
\raggedbottom
\begin{figure*}[t]
    \centering
    \begin{subfigure}[b]{0.48\linewidth}
        \includegraphics[trim={0 0 0 0},clip,width=\linewidth]{sections/figures/GNB_abl2.pdf}
        \caption{Gaussian Naive Bayes} \label{fig:gnbabl2}
    \end{subfigure}
    \hfill
    \begin{subfigure}[b]{0.48\linewidth}
        \includegraphics[trim={0 0 0 0},clip,width=\linewidth]{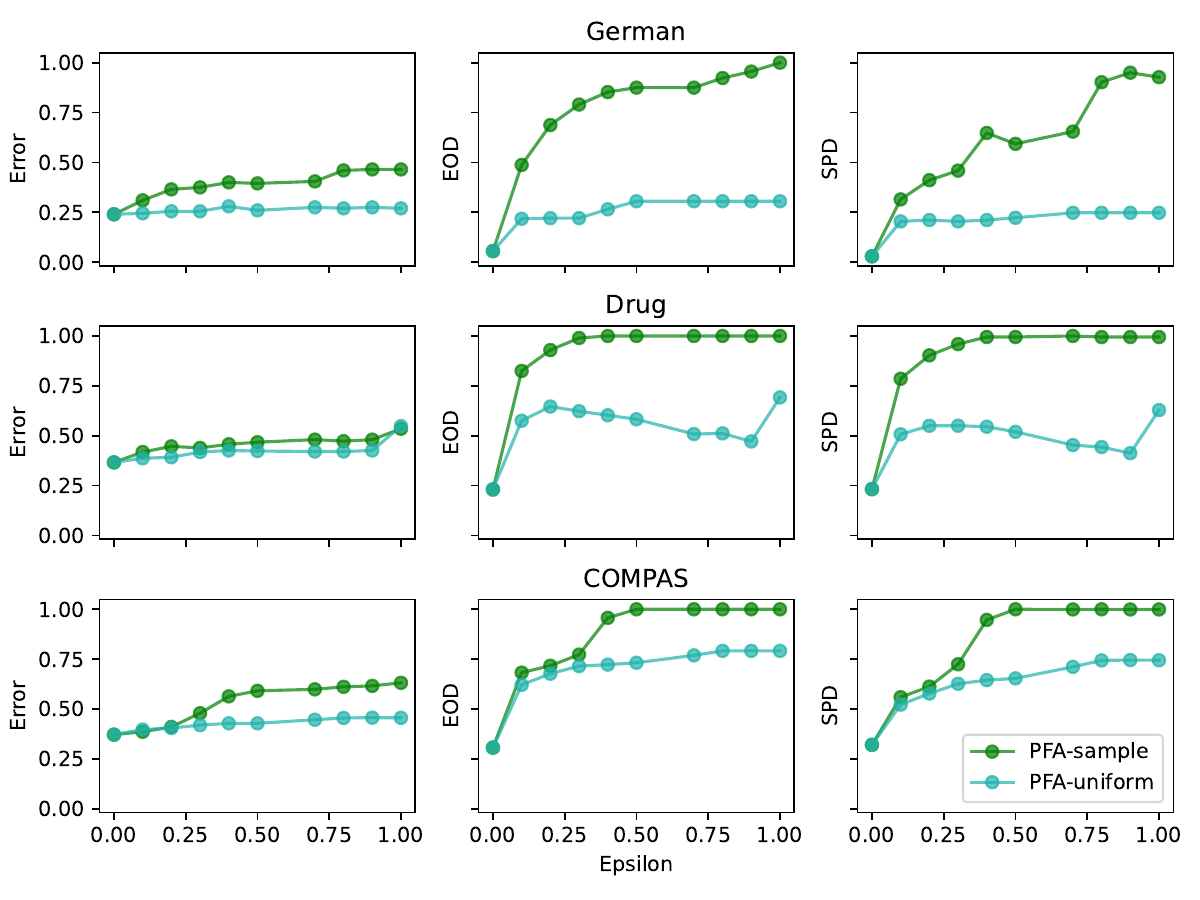}
        \caption{Logistic Regression} \label{fig:lrabl2}
    \end{subfigure}
    
    \vfill
    
    \begin{subfigure}[b]{0.48\linewidth}
        \includegraphics[trim={0 0 0 0},clip,width=\linewidth]{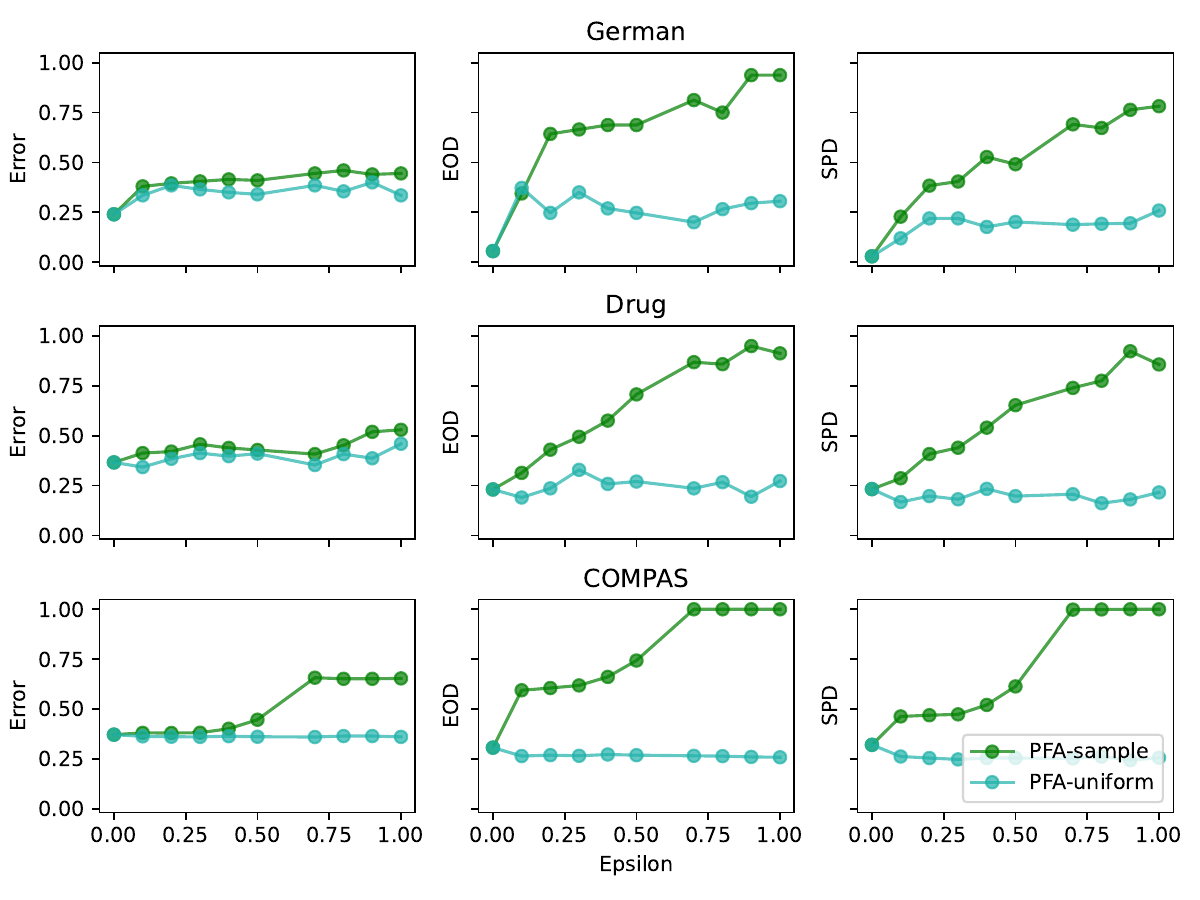}
        \caption{Decision Tree} \label{fig:dtabl2}
    \end{subfigure}
    \hfill
    \begin{subfigure}[b]{0.48\linewidth}
        \includegraphics[trim={0 0 0 0},clip,width=\linewidth]{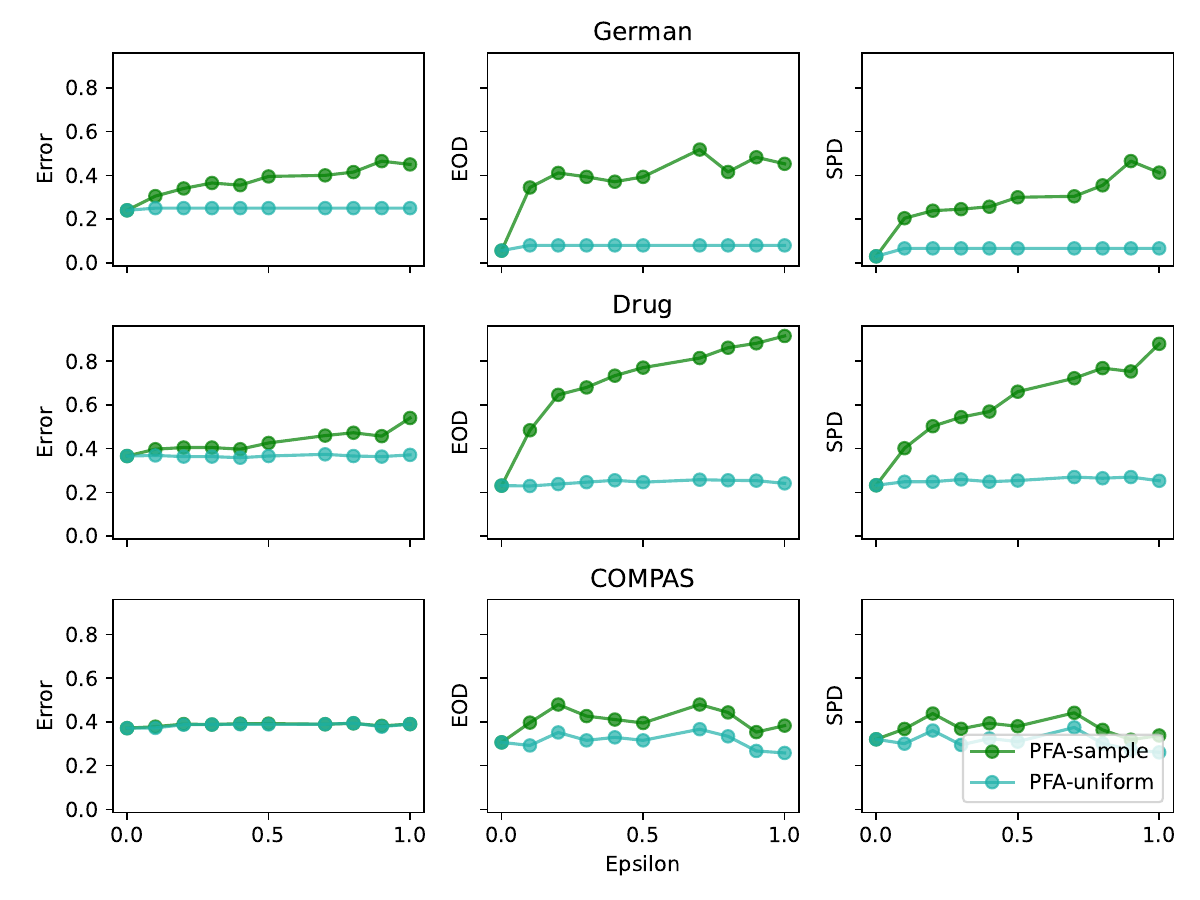}
        \caption{K-Nearest Neighbors} \label{fig:knnabl2}
    \end{subfigure}
    \caption{Comparing our approach using our heuristic sample-based poisoned sample generation method versus feasible poisoned sample generation method shows that the use of generating poisoned samples with sampling is much more efficient.} \label{fig:all_models_abl_2}
\end{figure*}
\raggedbottom
\end{appendices}

\end{document}